\documentclass{article}


\usepackage[nonatbib, final]{neurips_2025}




\usepackage[utf8]{inputenc} 
\usepackage[T1]{fontenc}    
\usepackage{url}            
\usepackage{booktabs}       
\usepackage{amsfonts}       
\usepackage{nicefrac}       
\usepackage{microtype}      

\usepackage{xcolor}         
\usepackage{amsmath}
\usepackage{sidecap}
\usepackage{graphicx}
\usepackage{dsfont}
\usepackage{ulem}
\usepackage{float}
\usepackage{amsthm}
\usepackage{enumitem}
\usepackage{wrapfig}
\usepackage{lipsum}
\usepackage{makecell} 


\usepackage[
  backend=biber,
  style=numeric-comp,
  sorting=none,
  giveninits=true,
  isbn=false,
  url=false,
  doi=false,
  eprint=false
]{biblatex}

\ExecuteBibliographyOptions{autocite=superscript}

\let\cite\autocite

\setcounter{mincomprange}{3}
\setcounter{maxcomprange}{100}

\DeclareNameAlias{sortname}{family-given}
\DeclareNameAlias{default}{family-given}

\DeclareFieldFormat[article]{title}{#1}
\DeclareFieldFormat[inproceedings]{title}{#1}

\DeclareFieldFormat[article]{journaltitle}{\uline{#1}}
\DeclareFieldFormat[inproceedings]{booktitle}{\uline{#1}}

\DeclareFieldFormat[article]{volume}{\textbf{#1}}
\DeclareFieldFormat[article]{number}{(#1)}
\DeclareFieldFormat[article]{pages}{#1}


\newbibmacro*{author+date}{%
  \usebibmacro{author/translator+others}%
  \setunit{\addspace}%
  \printfield{year}%
  \newunit\newblock
}

\renewbibmacro*{journal+issuetitle}{%
  \printfield{journaltitle}%
  \setunit*{\addspace}%
  \printfield{volume}%
  \printfield{number}%
}


\DeclareBibliographyDriver{article}{%
  \usebibmacro{bibindex}%
  \usebibmacro{begentry}%
  \usebibmacro{author+date}%
  \setunit{\labelnamepunct}\newblock
  \usebibmacro{title}%
  \newunit
  \usebibmacro{journal+issuetitle}%
  \newunit
  \printfield{pages}%
  \newunit
  \usebibmacro{finentry}%
}

\DeclareBibliographyDriver{inproceedings}{%
  \usebibmacro{bibindex}%
  \usebibmacro{begentry}%
  \usebibmacro{author+date}%
  \setunit{\labelnamepunct}\newblock
  \usebibmacro{title}%
  \newunit
  \printfield{booktitle}%
  \newunit
  \printfield{eventtitle}%
  \newunit
  \usebibmacro{byeditor+others}%
  \newunit
  \printfield{volume}%
  \newunit
  \printfield{series}%
  \newunit
  \printfield{pages}%
  \newunit
  \usebibmacro{finentry}%
}

\DeclareFieldFormat{labelnumberwidth}{#1.}
\DeclareFieldFormat{labelnumber}{#1}

\defbibenvironment{bibliography}
  {\list
     {\printfield[labelnumberwidth]{labelnumber}}
     {\setlength{\labelwidth}{\labelnumberwidth}%
      \setlength{\leftmargin}{\labelwidth}%
      \setlength{\labelsep}{0.1em}%
      \addtolength{\leftmargin}{\labelsep}%
      \setlength{\itemsep}{\bibitemsep}%
      \setlength{\parsep}{0pt}%
      \setlength{\itemindent}{0pt}%
      \setlength{\listparindent}{0pt}%
      \addtolength{\leftskip}{-1.0em}}%
   }
  {\endlist}
  {\item}

\addbibresource{bibliography.bib}

\usepackage[colorlinks=true, linkcolor=blue, citecolor=blue, urlcolor=blue]{hyperref}

\theoremstyle{definition}
\newtheorem{definition}{Definition}[section]

\theoremstyle{remark}

\theoremstyle{plain}
\newtheorem{theorem}{Theorem}[section]

\definecolor{eqblue}{rgb}{0.1, 0.1, 0.8} 

\newtheorem{corollary}[theorem]{Corollary}
\newtheorem{lemma}[theorem]{Lemma}

\newtheorem{proposition}[theorem]{Proposition}

\newcommand{\Span}{\operatorname{span}}


\title{RNNs perform task computations by\\dynamically warping neural representations}

\author{%
  Arthur Pellegrino \\
  The Gatsby Unit\\
  University College London\\
  \texttt{a.pellegrino@ucl.ac.uk}
  \And Angus Chadwick\\ 
  School of Informatics\\
  University of Edinburgh\\
  \texttt{angus.chadwick@ed.ac.uk}
}

\begin{document}

\maketitle

\vspace{-10pt}
\begin{abstract}
  Analysing how neural networks represent data features in their activations can help interpret how they perform tasks. Hence, a long line of work has focused on mathematically characterising the geometry of such ``neural representations.'' In parallel, machine learning has seen a surge of interest in understanding how dynamical systems perform computations on time-varying input data. Yet, the link between computation-through-dynamics and representational geometry remains poorly understood. Here, we hypothesise that recurrent neural networks (RNNs) perform computations by dynamically warping their representations of task variables. To test this hypothesis, we develop a Riemannian geometric framework that enables the derivation of the manifold topology and geometry of a dynamical system from the manifold of its inputs. By characterising the time-varying geometry of RNNs, we show that dynamic warping is a fundamental feature of their computations.
\end{abstract}

\section{Introduction}\label{section:intro}

\vspace{-10pt}

    \begin{wrapfigure}{r}{0.39\linewidth}
  \centering
  \vspace{-\intextsep}
  \includegraphics[width=0.95\linewidth]{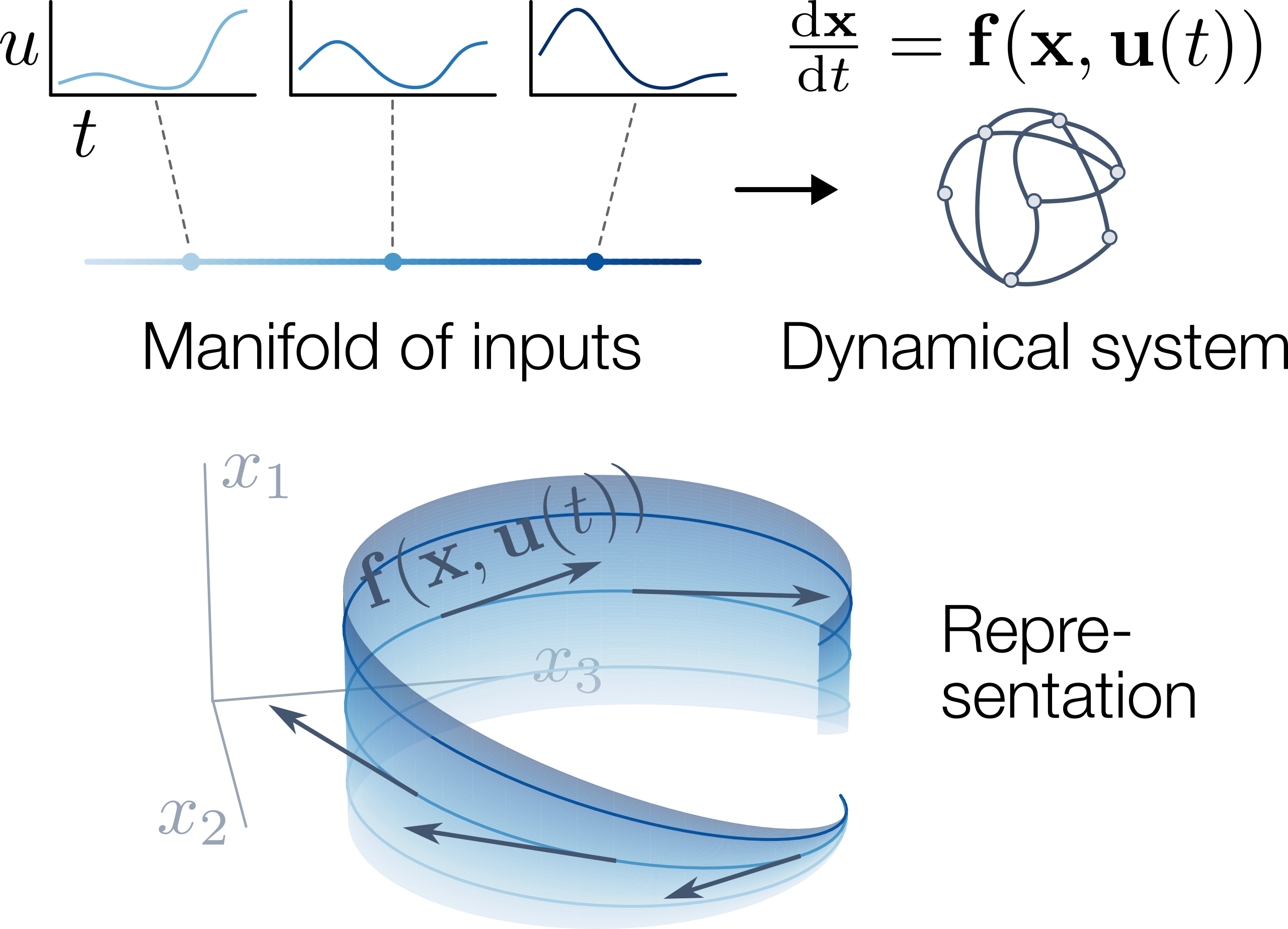}
  \vspace{-\abovecaptionskip}
  \vspace{2pt}
  \caption{\textbf{Dynamical systems receiving inputs on a low-dimensional manifold of functions are constrained to low-dimensional manifold of states.} The Riemannian geometry of this manifold can provide insights into dynamical representations and computations.  
  }
    \vspace{-10pt}
    \end{wrapfigure}


    %
    Understanding the low-dimensional neural representations used by biological and artificial neural networks to solve complex tasks is a key challenge in machine learning \cite{Cunningham2014l, Chung2021}. 
    For example, low-dimensional activity manifolds have been observed in neural systems performing vision \cite{Cohen2020, DiCarlo2007} or language tasks \cite{li2024emergentworldrepresentationsexploring}. More generally, data from the natural world often lie on low-dimensional manifolds\cite{fefferman2016testing}, and the computations performed by neural networks trained on these data can be interpreted as transformations of these data manifolds into neural activation manifolds. 
    
    However, existing work on the geometry of neural representations has focused on the \textit{static}, time-independent, setting \cite{Hauser2017}. In contrast, the geometry of the dynamic representations formed by neural networks driven by time-varying inputs remains largely unexplored.

    %
    Dynamical system models, described as differential equations, are ubiquitous in science and machine learning. The development and application of methods to infer dynamical systems from data\cite{chen2018neural, kidger2022neural, lee2022structure} or train dynamical systems to solve tasks\cite{yu2024second} are now central to computational neuroscience\cite{yang2019task, duncker2020organizing, pellegrino2023low}, physics\cite{dona2022constrained}, chemistry\cite{weichwald2022learning} and genomics\cite{atanackovic2023dyngfn, badia2023gene}.
    In machine learning, diffusion models based on stochastic differential equations achieve state-of-the-art performance for generative modelling of images \cite{Saharia2022}.
    In computational neuroscience, recurrent neural networks (RNNs) defined using differential equations are used routinely to investigate neural computations underlying various behavioural tasks \cite{Sussillo2014}.

    However, both data-driven and task-optimised nonlinear dynamical systems models are typically viewed as ``black boxes", with their behaviour lacking a clear and interpretable explanation\cite{Sussillo2013, smith2021reverse, valente2022extracting}.
    Attempts to interpret dynamical systems models typically focus on the topology and geometry of fixed point manifolds, and the locally linearised dynamics about them \cite{Sussillo2013}. This approach has provided insight into the computations used by RNNs to solve a range of tasks \cite{Khona2021}.
    Nevertheless, the study of dynamics near fixed points faces a number of limitations. First, many systems exhibit complex, time-varying activity patterns which do not settle into a fixed point \cite{Henaff2021, Saxena2022}, or nonlinear dynamics before settling into a fixed point which are crucial to the computation \cite{Machens2005, Wong2006, Stokes2013}. Second, the characterisation of fixed point manifolds is insufficient to determine the geometry and topology of the full activity manifold, and therefore provides an incomplete understanding of the system.

    Thus, two key questions remain. First, what is the relationship between the geometry and topology of the data manifold and that of the dynamical system's activity, and how does this relate to the internal dynamics of the system? 
    Second, in the case where the dynamical system is trained to perform a specific task, as are RNNs, how can these geometric and topological properties be related to the computations performed by the network to solve the task?

    Here, we show that Riemannian geometry can be leveraged to understand the neural representations used by dynamical systems to perform time-dependent task computations. 
    Specifically, we prove that under weak constraints the dynamical system's state is restricted to a manifold whose topology is closely related to that of the input data manifold. Since the input manifold is independent of the trained parameters of the model, the geometry of the neural activity manifold reflects the computation it performs on these inputs. 
    To analyse this geometry, we derive the pullback Riemannian metric on the neural manifold of a general class of dynamical systems. This metric naturally describes the local encoding of task variables and is defined via an adjoint differential equation. 
    Using this metric we demonstrate that several classic RNN models from neuroscience solve tasks by dynamically warping their internal representations of input variables. Overall, we offer a novel formal mathematical framework to characterise the geometry of dynamical systems beyond attractor manifolds, which can be used to study RNN representations and computations.

\subsection*{Contributions}
\vspace{-5pt}

\setlist[itemize]{leftmargin=10pt, labelindent=10pt}
    \begin{itemize}
        \item We derive a \uline{relationship between the topology of the manifold of inputs to a dynamical system and the manifold over which the state of the system is constrained}. We prove that for inputs on a $m$-dimensional manifold, the state of the system is constrained to an $m+1$-dimensional manifold. 
        \item We derive the pullback of the metric, from the neural state-space to the input manifold for a general class of dynamical systems. In several examples we show that this metric is a reflection of the computations performed by the dynamical system on its inputs to solve a task.
        \item In an RNN trained on a contextual decision-making task we show that, \uline{over time, the manifold becomes warped so as to compress irrelevant input information.} In a sequential working memory task, we show that \uline{RNN activity lies on a hyper-torus, whose extrinsic and intrinsic geometry dynamically warps} to retrieve different memories at different points in time.
    \end{itemize}

\subsection*{Related works}
\vspace{-5pt}

\textbf{Attractor manifolds and dynamical systems interpretability.} Attractor manifolds of dynamical system models have been widely studied to interpret the underlying computations of RNNs \cite{Darshan2022, Sagodi2024}, denoising diffusion models \cite{stanczuk2022your, nam2023discrete} and neuralODEs\cite{rohrhoferrole} in tasks spanning language \cite{aitken2020geometry} reinforcement learning\cite{nam2023discrete} and computational neuroscience \cite{Sussillo2013,krishna2023sufficient}. Here instead, we show that the computations performed by a dynamical system are reflected by the geometry of the manifold over which its state lies. Works beside ours have suggested going beyond the study of fixed points, for example via distillation of nonlinear dynamics to a low-dimensional subspace \cite{Schaeffer2020}, or by applying dimensionality reduction methods to the systems' dynamics\cite{cenedese2022data,valente2022extracting} or trajectories\cite{mante2013context,Aoi2018}. In contrast, we show that the low-dimensional geometry of the system can be exactly mathematically derived via an adjoint dynamical system.

\textbf{Riemannian neural representations.} The Riemannian geometry of neural activity in deep networks has been extensively studied\cite{Hauser2017, Benefati2023, Katsman2023, brandon2025emergent}. This has helped interpret how neural networks represent features of images\cite{aubry2015understanding, tai2019equivariant}, RL task variables\cite{tennenholtz2022uncertainty} and text\cite{dahide} in their neural activity. However, existing frameworks typically assume static inputs. Here, we mathematically characterise the dynamic geometry of neural networks described via differential equations receiving time-varying inputs.

\clearpage 
\section{Riemannian geometry can link representation and computation}\label{section:metric-static}

\textbf{Review of Riemannian geometry.} Before studying the manifolds of dynamical systems, we first consider a \textit{static} deep neural network to provide insight into a simplified setting. Throughout, we work with a function $\varphi:\mathcal{M}\rightarrow\mathbb{R}^n$ mapping inputs on a data manifold $\mathcal{M}$ to an ambient space $\mathbb{R}^n$. In the case where $\varphi$ is a deep neural network, this ambient space may be that of the $n$ neurons in one of its hidden layers. In the next section, we generalise this setting to the time-varying case, where $\varphi$ is the solution to a dynamical system, and $\mathbb{R}^n$ its state-space.

Following previous work \cite{Hauser2017, Benefati2023, Katsman2023}, we can derive a metric on $\mathcal{M}$, called the \textit{pullback metric}, which captures the change in geometry generated by the neural network $\varphi$. We denote the tangent space at a point $p\in \mathcal{M}$ as $T_p \mathcal{M}$. If $\varphi$ is differentiable and $\mathrm{d}\varphi:T_p\mathcal{M}\rightarrow \mathbb{R}^n$ is injective then any metric defined on $\mathbb{R}^n$ can be ``pulled back" to obtain a metric on $\mathcal{M}$. One such metric is the standard dot product: $\langle \cdot,\cdot \rangle:\mathbb{R}^n \times \mathbb{R}^n\rightarrow \mathbb{R}$. The metric inherited by $\mathcal{M}$ from $\mathbb{R}^n$ via pullback is defined as:
\[
    g:T_p\mathcal{M} \times T_p\mathcal{M} \xrightarrow{\mathrm{d}\varphi, \mathrm{d}\varphi} \mathbb{R}^n\times \mathbb{R}^n \xrightarrow{\langle \cdot, \cdot \rangle} \mathbb{R} .
\]
In words, the inner product between tangent vectors of $\mathcal{M}$ can be measured by first mapping these vectors to $\mathbb{R}^n$, and then taking their usual Euclidean dot product. If the manifold $\mathcal{M}$ is endowed with local coordinates $(x_1, ..., x_m)$, the tangent space at a point $p\in\mathcal{M}$ has a natural basis $\mathbf{v}_1(p)=\frac{\partial}{\partial x_1}, ..., \mathbf{v}_m(p)=\frac{\partial}{\partial x_m}\in T_p\mathcal{M}$, corresponding to infinitesimal changes on the manifold as a function of infinitesimal changes in any of the local coordinates. In that case, the metric $g$ can be written, at a particular point, as a matrix $G(p)\in\mathbb{R}^{m\times m}$ whose $(i,j)$th entry corresponds to the inner product between $\mathbf{v}_i(p)$ and $\mathbf{v}_j(p)$, that is $G_{ij}(p)=g(\mathbf{v}_i(p), \mathbf{v}_j(p))=(\mathrm{d}\varphi(p)\mathbf{v}_i(p)) \cdot (\mathrm{d}\varphi(p)\mathbf{v}_j(p))$.

Importantly, the metric $g$ carries all the information about the intrinsic geometry of the manifold $\varphi(\mathcal{M})\subseteq \mathbb{R}^n$. For example, it captures the local curvature and warping of the geometry of the data manifold induced by the mapping $\varphi$. We next show how the \textit{intrinsic geometry} of the hidden-layer manifold $\varphi(\mathcal{M})$ can provide insights into the \textit{computations} performed by he network.

\textbf{Deep neural network representations.} We start by providing intuition in the setting of deep neural networks with no time-dependence. To this end, we built a simple feedforward network trained on a binary classification task. The inputs were drawn from a circular manifold -- embedded in a 2D plane -- and the network was trained to map angles $\theta\in[0\,\pi)$ to $1$ and $\theta\in[\pi,2\pi)$ to $-1$ (Fig. \ref{fig:static-network}\textbf{a}; \ref{sup:subsection:static}). A simple $3$-neuron network is sufficient to solve this task, and visualising the activation the neurons in response to all inputs showed that the network learned a non-trivial representation of the input manifold (Fig. \ref{fig:static-network}\textbf{b}). 

Since this is a $1$-d manifold, the metric has a single entry describing the magnitude of the change in neural activation given a small change in the angle: $G_{\theta\theta}=\textstyle\frac{d\mathbf{z}}{d\theta}\cdot \textstyle\frac{d\mathbf{z}}{d\theta}=\lVert\textstyle\frac{\mathbf{dz}}{\mathbf{dx}}\textstyle\frac{\mathbf{dx}}{d\theta}\rVert^2$ where $\frac{\mathbf{dz}}{\mathbf{dx}} = W\text{diag}(\phi'(W[\cos(\theta), \sin(\theta)]))$ and $\frac{\mathbf{dx}}{d\theta}=[-\sin(\theta), \cos(\theta)]$,
for a hidden layer activation $\mathbf{z}=\phi(W\mathbf{x})$ and an input $\mathbf{x}=[\cos(\theta), \sin(\theta)]$. Visualising the pullback metric as gridlines on the input manifold by integrating $\sqrt{G_{\theta\theta}}$ over $\theta$ highlights that space was warped (stretched) around the decision boundaries on the circle (Fig. \ref{fig:static-network}\textbf{c}). This means that near the decision boundary, small changes in the input angle lead to large changes in the internal representation --- corresponding to changes from one binary output to the other. In comparison, the untrained network did not show this effect (\ref{sup:subsection:static}). 

\begin{figure}[H]
    \begin{minipage}[t]{0.45\textwidth}
    \vspace{-\fboxsep}
        \caption{\textbf{The pullback metric captures features of task-computation.} \textbf{a.} Network with one hidden layer and tanh nonlinearity trained to map $\mathbf{x}=[\cos(\theta), \sin(\theta)]$ to $\mathbf{y}=\mathds{1}_{\theta\in[0, \pi)}-\mathds{1}_{\theta\in (\pi, 2\pi)}$. \textbf{b.} Activation of the three hidden units of the network in response to the inputs. \textbf{c.} Metric learned by the network --- represented as gridlines on the input manifold --- illustrating that the manifold has been warped around the class boundary.}
    \end{minipage}
    \hfill
    \begin{minipage}[t]{0.5\textwidth}
    \vspace{-\fboxsep}
        \includegraphics[width=\linewidth]{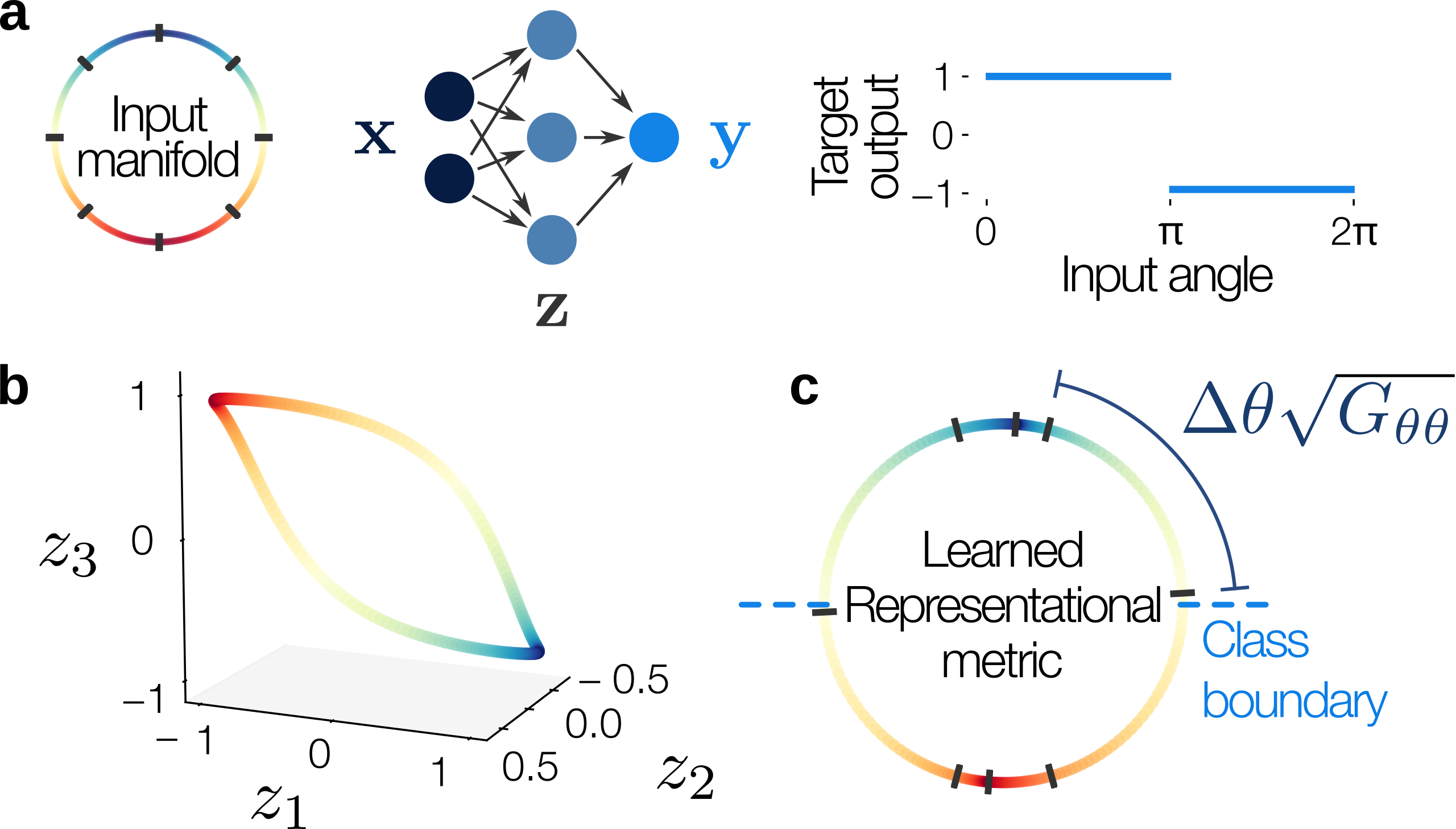}
    \end{minipage}
    \label{fig:static-network}
\end{figure}

\uline{Hence, the computations performed by a deep neural network can be understood via the warping of the internal representations of its input variables. The next section generalises this to dynamical systems, where the inputs, and therefore the representation itself, are time-varying.}

\clearpage
\section{Dynamical systems are constrained by their input topology} \label{section:metric-dynamic}

\textbf{Dynamic neural representations.} In the static network considered above, the activation of the hidden units was naturally constrained to the same manifold as the input itself. Thus, the topology of the input manifold places strong constraints on the hidden-layer topology. We next generalise this relationship to that between the manifold of time-varying inputs to a dynamical system and the manifold to which its state is constrained. Throughout, we consider a dynamical system of the form:
\[
  \mathbf{\dot x}(t) = \mathbf{f}(\mathbf{x}(t), \mathbf{u}(t)), \quad \mathbf{x}(0)=\mathbf{x}_0\in\mathbb{R}^n, \quad \mathbf{x}(t)\in\mathbb{R}^n, \quad \mathbf{f}:\mathbb{R}^n\times \mathbb{R}^d\rightarrow \mathbb{R}^n
\]
Our central theorem says that if the inputs $\mathbf{u}$ lie on a low-dimensional manifold then so does $\mathbf{x}(t)$.
\begin{theorem}
    Consider a dynamical system as above, and let $\mathbf{u}\in\mathcal{M}$ where $\mathcal{M}$ is an $m$-dimensional manifold. Then $\mathbf{x}(t)\in \mathcal{N}$ where $\mathcal{N}=P(\mathcal{M}\times \mathbb{R})$ and $P$ is a projection map.
\end{theorem}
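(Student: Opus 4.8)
The plan is to realise $\mathcal{N}$ as the image of an explicit smooth map on the $(m+1)$-dimensional space $\mathcal{M}\times\mathbb{R}$ and then to show this map is an immersion, so that its image inherits the structure of an $(m+1)$-dimensional manifold. Fixing $\mathbf{x}_0$ and assuming $\mathbf{f}$ is $C^1$ and Lipschitz in its first argument, existence and uniqueness give, for each input $\mathbf{u}\in\mathcal{M}$, a unique trajectory $t\mapsto\mathbf{x}(t;\mathbf{u})$. This defines the solution (``projection'') operator
\[
    P:\mathcal{M}\times\mathbb{R}\to\mathbb{R}^n,\qquad P(\mathbf{u},t)=\mathbf{x}(t;\mathbf{u}),
\]
whose image $\mathcal{N}:=P(\mathcal{M}\times\mathbb{R})$ is precisely the set of reachable states, so $\mathbf{x}(t)\in\mathcal{N}$ holds by construction. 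The real content of the theorem is therefore that $\mathcal{N}$ is a manifold of dimension $m+1$.

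I would next verify smoothness of $P$ and compute its differential. Smoothness is the classical theorem on smooth dependence of ODE solutions on parameters: picking local coordinates $(\theta_1,\dots,\theta_m)$ on $\mathcal{M}$, the map $(\theta,t)\mapsto\mathbf{x}(t)$ is as regular as $\mathbf{f}$ and the parametrisation allow. The differential $\mathrm{d}P$ has $m+1$ columns, one for time,
\[
    \partial_t P=\mathbf{f}\bigl(\mathbf{x}(t),\mathbf{u}(t)\bigr),
\]
and one for each input coordinate, namely the sensitivity $\mathbf{s}_i(t)=\partial_{\theta_i}\mathbf{x}(t)$, which solves the first variational equation
\[
    \dot{\mathbf{s}}_i=\partial_{\mathbf{x}}\mathbf{f}\,\mathbf{s}_i+\partial_{\mathbf{u}}\mathbf{f}\,\partial_{\theta_i}\mathbf{u},\qquad \mathbf{s}_i(0)=\mathbf{0},
\]
the zero initial condition reflecting that $\mathbf{x}_0$ is shared across all inputs.

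The crux is to show $\mathrm{d}P$ has full rank $m+1$, so that $P$ is an immersion and $\mathcal{N}$ is locally an $(m+1)$-dimensional submanifold of $\mathbb{R}^n$ by the immersion theorem. This reduces to linear independence in $\mathbb{R}^n$ of $\{\mathbf{f},\mathbf{s}_1,\dots,\mathbf{s}_m\}$, which already forces $n\ge m+1$. By variation of constants,
\[
    \mathbf{s}_i(t)=\int_0^t\Phi(t,\tau)\,\partial_{\mathbf{u}}\mathbf{f}\,\partial_{\theta_i}\mathbf{u}(\tau)\,\mathrm{d}\tau,
\]
with $\Phi$ the state-transition matrix of the linearised dynamics, so independence of the $\mathbf{s}_i$ follows from independence of the forcing directions $\partial_{\theta_i}\mathbf{u}$ — guaranteed because $\mathcal{M}$ is embedded — under a controllability-type nondegeneracy of the variational system. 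Together with $\mathbf{f}\notin\Span\{\mathbf{s}_1,\dots,\mathbf{s}_m\}$, this is what I would read as the theorem's ``weak constraints.''

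The \emph{main obstacle} is precisely this rank condition and pinning down when it fails. The rank drops to $m$ at equilibria where $\mathbf{f}=\mathbf{0}$, and — because $\mathbf{s}_i(0)=\mathbf{0}$ — the image is pinched at $\mathbf{x}_0$ at $t=0$, only fanning out to full dimension for $t>0$ once $\dot{\mathbf{s}}_i(0)=\partial_{\mathbf{u}}\mathbf{f}\,\partial_{\theta_i}\mathbf{u}(0)$ pushes the sensitivities off zero; I would argue these exceptional configurations are non-generic. A secondary point is upgrading the immersed image to an embedded manifold, which needs $P$ injective and proper on the working domain; where distinct $(\mathbf{u},t)$ reach the same state, $\mathcal{N}$ must be read as an immersed submanifold or the associated quotient.
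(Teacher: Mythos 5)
Your construction proves a different (and stronger) statement than the theorem, and the part that goes beyond the theorem has a real gap. The paper's proof never needs your rank condition: it augments the state to $\mathbf{y}=[\mathbf{x},t,\boldsymbol{\kappa}]$, observes that $\mathbf{y}$ obeys an \emph{autonomous} ODE with initial condition $[\mathbf{x}_0,0,\boldsymbol{\kappa}]$, and invokes smooth dependence on initial conditions (flows of smooth autonomous vector fields are diffeomorphisms) to conclude that the set of augmented solutions is, unconditionally, a manifold diffeomorphic to $\mathcal{M}\times\mathbb{R}$. The map $P$ in the theorem is then the literal coordinate projection $\mathbf{y}\mapsto\mathbf{y}_{1:n}$, and the theorem asserts only that $\mathbf{x}(t)$ lies in the image of that projection --- not that this image is itself an $(m+1)$-dimensional submanifold of $\mathbb{R}^n$. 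In your writeup, ``$P$'' is instead the solution operator itself, so the membership $\mathbf{x}(t)\in\mathcal{N}$ is indeed immediate, but everything you then spend effort on (immersion of the image into state space) corresponds to the paper's Corollary \ref{cor:immersion}, where the condition $\Span\{\mathrm{d}\varphi\frac{\partial}{\partial t},\mathrm{d}\varphi\frac{\partial}{\partial\mathbf{u}_i}\}\cong\mathbb{R}^{m+1}$ is taken as an explicit \emph{hypothesis}, precisely because it cannot be derived from the setup.

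That is the gap: your genericity argument for the rank condition fails. Since all trajectories share the initial state, $\mathbf{s}_i(0)=\mathbf{0}$ for every $i$, so the rank equals $1$ on the entire slice $\mathcal{M}\times\{0\}$; the pinch at $\mathbf{x}_0$ is structural, not an exceptional configuration that can be perturbed away. Likewise the rank degenerates wherever trajectories approach fixed points ($\mathbf{f}\to\mathbf{0}$), which is the typical long-time regime of the trained RNNs the paper studies (this is exactly the rank-deficient-metric behaviour, $G_{tt}\to 0$, discussed in Section \ref{section:metric-dynamic}). Your ``controllability-type nondegeneracy'' is also assumed rather than proven: linear independence of the forcing directions $\partial_{\theta_i}\mathbf{u}$ does not by itself imply linear independence of the integrated sensitivities $\mathbf{s}_i(t)=\int_0^t\Phi(t,\tau)\,\partial_{\mathbf{u}}\mathbf{f}\,\partial_{\theta_i}\mathbf{u}(\tau)\,\mathrm{d}\tau$. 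So as a proof of the theorem as literally stated, your argument reduces to the trivial membership claim (with a solution operator mislabelled as a projection); as a proof that $\mathcal{N}$ is an $(m+1)$-manifold it is incorrect, and the theorem was routed through the augmented space precisely to avoid having to claim that.
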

This theorem explicitly characterises the dimensionality of the manifold of states of the system.
\begin{corollary}\label{thm:dimensionality}
    If $\mathcal{M}$ is $m$-dimensional then $\mathcal{N}$ is at most $m+1$-dimensional.
\end{corollary}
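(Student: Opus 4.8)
The plan is to read the dimension bound directly off the structural description of $\mathcal{N}$ supplied by the Theorem, so that the argument reduces to a dimension count for the image of a product manifold under a smooth map. By the Theorem we may write $\mathcal{N} = P(\mathcal{M}\times\mathbb{R})$; it therefore suffices to (i) compute $\dim(\mathcal{M}\times\mathbb{R})$ and (ii) argue that applying the projection $P$ cannot raise dimension.

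For step (i), I would use that $\mathcal{M}\times\mathbb{R}$ is a smooth product manifold whose dimension is the sum of the dimensions of its factors. Since $\mathcal{M}$ is $m$-dimensional and $\mathbb{R}$ is $1$-dimensional, this gives $\dim(\mathcal{M}\times\mathbb{R}) = m+1$. For step (ii), I would invoke the general fact that a $C^1$ map cannot increase dimension: at each point of the domain the differential $\mathrm{d}P$ has rank at most $m+1$, so by the constant-rank theorem the image is locally contained in an embedded submanifold of dimension equal to that rank, hence at most $m+1$. Because $\mathcal{M}\times\mathbb{R}$ is second countable, it is covered by countably many such charts and $\mathcal{N}$ is a countable union of pieces, each of dimension at most $m+1$; this yields $\dim\mathcal{N}\le m+1$. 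The ``at most'' in the statement is precisely what accommodates places where $\mathrm{d}P$ drops rank and the image is locally thinner.

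The step I expect to be the main obstacle is making ``dimension'' precise, since $P$ need not be an immersion or an embedding and its image $\mathcal{N}$ may therefore fail to be a manifold---it can have self-intersections and points where the local rank changes. The careful work is to fix a robust notion of dimension (for example the maximal dimension over the smooth strata of the image, or equivalently its Hausdorff dimension, using that $C^1$ maps are locally Lipschitz and Lipschitz maps do not increase Hausdorff dimension) and to verify that the rank bound on $\mathrm{d}P$ controls every stratum. Once dimension is pinned down in this way, the bound $m+1$ follows immediately from steps (i) and (ii).
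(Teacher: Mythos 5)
Your proposal is correct and follows essentially the same route as the paper: the paper likewise obtains the corollary by noting that $\mathcal{N}$ is the image under the projection $P$ of the $(m+1)$-dimensional manifold $\mathcal{M}\times\mathbb{R}$ (realized there via the augmented state $\mathbf{y}=[\mathbf{x}(t),t,\boldsymbol{\kappa}]$, whose flow is a diffeomorphism), so its dimension cannot exceed $m+1$. The only difference is one of rigor: the paper asserts the dimension bound in a single line as a consequence of its topology theorem, whereas you spell out the step it leaves implicit --- why a smooth projection cannot raise dimension (rank bound on $\mathrm{d}P$, plus a Hausdorff-dimension or stratification argument to handle the fact that $\mathcal{N}$ need not be a manifold).
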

We stress that the dimensionality  $m$ of the input manifold need not match the dimensionality $d$ of the space in which the inputs lie at any particular time point. For example, constant inputs of varying angles might be embedded as $\mathbf{u}(t)=[\cos(\theta), \sin(\theta)]$ and therefore have $d=2$ and $m=1$ (i.e. $m < d$). In contrast, the space of all possible Morse codes is infinite-dimensional ($m=\infty$) yet has $d=1$ (i.e. $m > d$). Theorem 3.1 shows that $m$ rather than $d$ determines the dimensionality of the dynamical system's manifold. In particular, in many --- but not all --- RNN models of neuroscience tasks $m$ is as low as $1$ or $2$ (Table 1).

\vspace{-5pt}
\begin{table}[H]
\begin{minipage}{0.35\linewidth}
    \caption{Tasks modelled using RNNs tend to have low $m$ while the $d$ can vary based on modelling choices.}
\end{minipage}
\begin{minipage}{0.63\linewidth}\centering
{
\setlength{\tabcolsep}{3pt} 
\begin{tabular}{lcccc}
\hline
Task: & \textbf{\makecell{Working\\memory\cite{cueva2021recurrent}
}} & \textbf{\makecell{Decision\\making\cite{mante2013context}}} & \textbf{Timing\cite{wang2018flexible}} & \textbf{Navigation\cite{redman2024not}} \\
\hline
$m=$ & 2 & 2 & 1 & $\infty$ \\
$d=$ & 3 & 4 & 2 & 2 \\
\hline
\end{tabular}}\end{minipage}
\end{table}
\vspace{-15pt}

Theorem \ref{thm:dimensionality} provides a clear result regarding the dimensionality of the manifold of solutions to a dynamical system. This manifold is also a submanifold of the state-space $\mathbb{R}^n$.
\begin{corollary}\label{cor:immersion}
    Let $\left\{\frac{\partial}{\partial\mathbf{u}_i}\right\}_{i\in [m]}$ a basis of $T_\mathbf{u}\mathcal{M}$. Define $\varphi$ the solution of the dynamical system:
    \[
        \varphi(\mathbf{u}, t) = \mathbf{x}_0 + \textstyle\int_{0}^t \mathbf{f}(\mathbf{x}, \mathbf{u}(\tau))d\tau, \quad \varphi:\mathcal{M}\times \mathbb{R}_+\rightarrow \mathbb{R}^n
    \]
    Then if $\Span\left\{\mathrm{d}\varphi\frac{\partial}{\partial t}, \mathrm{d}\varphi\frac{\partial}{\partial\mathbf{u}_i}\right\}_{i\in[m]}  \cong\mathbb{R}^{m+1}$ for all $p$ then $\mathcal{N}$ is an immersed submanifold of $\mathbb{R}^n$.  
\end{corollary}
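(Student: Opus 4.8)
The plan is to read the hypothesis as precisely the statement that $\varphi$ is an immersion, and then to invoke the standard fact that the image of an immersion carries an immersed-submanifold structure. The domain $\mathcal{M}\times\mathbb{R}_+$ is a smooth $(m+1)$-dimensional manifold (with boundary at $t=0$), whose tangent space at any point $p=(\mathbf{u},t)$ is spanned by the $m+1$ vectors $\frac{\partial}{\partial t},\frac{\partial}{\partial\mathbf{u}_1},\dots,\frac{\partial}{\partial\mathbf{u}_m}$. First I would record that $\varphi$ is smooth: since it is the solution map of the ODE, smoothness in $t$ is immediate from the fundamental theorem of calculus, and smoothness in the input coordinates $\mathbf{u}_i$ follows from the classical theorem on smooth dependence of ODE solutions on parameters and initial data, assuming $\mathbf{f}$ is as regular as needed.

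Next I would observe that the hypothesis $\Span\{d\varphi\frac{\partial}{\partial t},\,d\varphi\frac{\partial}{\partial\mathbf{u}_i}\}_{i\in[m]}\cong\mathbb{R}^{m+1}$ asserts that these $m+1$ image vectors are linearly independent at every $p$. Equivalently, $d\varphi_p$ maps the $(m+1)$-dimensional space $T_p(\mathcal{M}\times\mathbb{R}_+)$ onto an $(m+1)$-dimensional image, so it has full rank and is therefore injective. Hence $\varphi$ is an immersion at every point of its domain. It is worth noting for the reader that $d\varphi\frac{\partial}{\partial t}=\mathbf{f}(\mathbf{x}(t),\mathbf{u}(t))$ is the velocity of the trajectory, while the $d\varphi\frac{\partial}{\partial\mathbf{u}_i}$ are the input-sensitivities governed by the adjoint/variational equations referenced elsewhere in the paper; thus the hypothesis is the geometrically natural requirement that the flow direction and the input-variation directions remain jointly non-degenerate.

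Finally I would conclude by applying the standard result from smooth manifold theory: the image of an injective immersion, endowed with the topology and smooth structure transported along $\varphi$, is an immersed submanifold, and $\mathcal{N}=\varphi(\mathcal{M}\times\mathbb{R}_+)=P(\mathcal{M}\times\mathbb{R})$ is precisely this image. Because the immersion property makes $\varphi$ a local embedding, each point of $\mathcal{N}$ has a neighbourhood that is an embedded $(m+1)$-dimensional submanifold of $\mathbb{R}^n$, which is exactly the immersed-submanifold conclusion.

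The main obstacle I anticipate is the gap between \emph{immersion} — pointwise injectivity of $d\varphi$, which the hypothesis directly supplies — and \emph{injectivity of $\varphi$ itself}, which the definition of an immersed submanifold as a single globally well-defined subset typically requires. The rank condition alone does not prevent the trajectory manifold from self-intersecting, since two distinct input/time pairs could map to the same state in $\mathbb{R}^n$, as happens for systems that forget their inputs. I would handle this either by explicitly invoking the weaker, purely local notion of immersed submanifold that the hypothesis can guarantee, or by separately arguing injectivity of $\varphi$ from properties of the particular dynamical system when a genuinely global submanifold is wanted. Some routine care is also needed at the boundary $t=0$, but this is standard once one works in the category of manifolds with boundary.
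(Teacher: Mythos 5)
Your proof is correct and takes essentially the same route as the paper: the paper likewise reads the span hypothesis as saying $\mathrm{d}\varphi$ has full rank $m+1$ at every point, i.e.\ $\varphi$ is an immersion, and then concludes (in the remark following its proof of the topology theorem, where $\varphi$ is factored through the augmented flow and the projection $P$) that the image $\mathcal{N}$ is an immersed submanifold of $\mathbb{R}^n$. Your closing caveat --- that the rank condition alone gives only an immersion, not injectivity of $\varphi$, so self-intersections of $\mathcal{N}$ are not excluded and one must either use the weaker local notion of immersed submanifold or argue injectivity separately --- is a genuine subtlety the paper passes over silently, and your explicit handling of it is more careful than the original.
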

Theorem \ref{thm:dimensionality} and corollary \ref{cor:immersion} are illustrated in figure \ref{fig:dynamic}\textbf{a}. Often, one will parametrise the input functions with a real-valued variables $\boldsymbol{\kappa} \in \mathbb{R}^m$ (i.e. in local coordinates of $\mathcal{M}$), with the dynamics then being: $\mathbf{\dot x} = \mathbf{f}(\mathbf{x}, \mathbf{u}_{\boldsymbol{\kappa}}(t))$. Then, this basis can be more concretely written as $\frac{\partial \mathbf{x}}{\partial \kappa_i} = \frac{\partial \mathbf{x}}{\partial \mathbf{u}}\frac{\partial \mathbf{u}}{\partial \kappa_i} \in\mathbb{R}^n$.
Our numerical analyses will depend on this basis and the metric resulting from it. In supplementary materials \ref{sup:section:math} we extend this result to: i) the case where the dynamical system is defined on another space than the Euclidean space ii) the case where the initial state of the system also varies on a low-dimensional manifold iii) the effect of stochasticity in the dynamics. 
Similarly to the static case, a natural metric can be defined on this manifold characterised via an adjoint dynamical system:
\begin{theorem}\label{thm:adjoint} In local coordinates, the pullback metric on $\mathcal{N}$ is $G_{\mathcal{N}}=UU^T$ with $U=[[\mathbf{f}, 1, \boldsymbol{0}], [\mathbf{a}, \boldsymbol{0}, I]]$ where $\mathbf{a}_i = \frac{\partial \mathbf{x}}{\partial \kappa_i}\in\mathbb{R}^{n}$ follows the dynamics\footnote{From here on, we ignore time dependencies for notational clarity.}:
\[
    \mathbf{\dot a}_i = J_{\mathbf{f}}\mathbf{a}_i + \mathbf{h}(\mathbf{x}, t, \kappa)
\]
\end{theorem}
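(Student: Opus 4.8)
The plan is to realise $\mathcal{N}$ as the image of an \emph{augmented} immersion, read off the pullback metric as the Gram matrix of the coordinate tangent vectors, and then obtain the evolution of $\mathbf{a}_i$ from the variational (sensitivity) equation of the governing ODE. Concretely, I would fix the local coordinates $(t,\kappa_1,\dots,\kappa_m)$ on $\mathcal{N}$ supplied by Corollary~\ref{cor:immersion} and work with the map $\Phi(\boldsymbol{\kappa},t)=(\mathbf{x}(\boldsymbol{\kappa},t),\,t,\,\boldsymbol{\kappa})\in\mathbb{R}^{n+1+m}$, whose projection onto the first $n$ coordinates recovers $\mathcal{N}\subseteq\mathbb{R}^n$ (this is the projection $P$ of Theorem~3.1). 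Retaining the $t$ and $\boldsymbol{\kappa}$ coordinates is the point of the construction: the lower blocks of $\mathrm{d}\Phi$ contribute an identity, so $\mathrm{d}\Phi$ is injective and the pullback is a genuine (positive-definite) Riemannian metric even where $\mathrm{d}\varphi$ alone degenerates.

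Next I would compute the coordinate tangent vectors in the embedding space. Differentiating the solution in time gives $\partial\mathbf{x}/\partial t=\mathbf{f}(\mathbf{x},\mathbf{u})$ directly from $\dot{\mathbf{x}}=\mathbf{f}(\mathbf{x},\mathbf{u})$, while $\partial\mathbf{x}/\partial\kappa_i=\mathbf{a}_i$ by definition; appending the derivatives of the coordinate functions $t$ and $\boldsymbol{\kappa}$ yields the columns $(\mathbf{f},1,\boldsymbol{0})$ and $(\mathbf{a}_i,0,\mathbf{e}_i)$ of $\mathrm{d}\Phi$. Stacking these as $U=(\mathrm{d}\Phi)^{T}$ and invoking the pullback-metric definition of Section~\ref{section:metric-static}, $G_{\mathcal{N}}=(\mathrm{d}\Phi)^{T}\mathrm{d}\Phi=UU^{T}$, which is precisely the claimed block form. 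Expanding the product shows $G_{\mathcal{N}}$ equals the Gram matrix of the state parts plus $I_{m+1}$, making the role of the identity blocks (guaranteeing $G_{\mathcal{N}}\succ0$) transparent.

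The substantive step is the dynamics of $\mathbf{a}_i$. I would differentiate the identity $\dot{\mathbf{x}}=\mathbf{f}(\mathbf{x},\mathbf{u}_{\boldsymbol{\kappa}}(t))$ with respect to $\kappa_i$, interchange the $t$- and $\kappa_i$-derivatives on the left to get $\partial_t(\partial_{\kappa_i}\mathbf{x})=\dot{\mathbf{a}}_i$, and apply the chain rule on the right to obtain $J_{\mathbf{f}}\,\mathbf{a}_i+(\partial\mathbf{f}/\partial\mathbf{u})(\partial\mathbf{u}_{\boldsymbol{\kappa}}/\partial\kappa_i)$, where $J_{\mathbf{f}}=\partial\mathbf{f}/\partial\mathbf{x}$. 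Reading off the forcing term $\mathbf{h}(\mathbf{x},t,\kappa)=(\partial\mathbf{f}/\partial\mathbf{u})(\partial\mathbf{u}_{\boldsymbol{\kappa}}/\partial\kappa_i)$ gives the stated inhomogeneous linear ODE, which one integrates from the initial condition $\mathbf{a}_i(0)=\partial_{\kappa_i}\mathbf{x}_0$ (vanishing whenever $\mathbf{x}_0$ is independent of $\boldsymbol{\kappa}$).

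The main obstacle is justifying the interchange of derivatives, i.e.\ the smooth dependence of the flow on the parameter $\boldsymbol{\kappa}$. Here I would lean on the standard theorem on differentiable dependence of ODE solutions on parameters: assuming $\mathbf{f}$ is $C^{1}$ in both arguments and $\mathbf{u}_{\boldsymbol{\kappa}}$ is $C^{1}$ in $\boldsymbol{\kappa}$, the solution $\mathbf{x}(\boldsymbol{\kappa},t)$ is $C^{1}$ in $(\boldsymbol{\kappa},t)$ with continuous mixed partials, which legitimises both the swap $\partial_t\partial_{\kappa_i}=\partial_{\kappa_i}\partial_t$ and differentiation under the integral sign in the defining integral form of $\varphi$ from Corollary~\ref{cor:immersion}. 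With that regularity in hand, the remainder is routine bookkeeping.
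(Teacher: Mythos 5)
Your proposal is correct and takes essentially the same approach as the paper: both realise $\mathcal{N}$ via the augmented map $(\mathbf{x},t,\boldsymbol{\kappa})$, read off $G_{\mathcal{N}}=UU^T$ from the block Jacobian $[\mathbf{f},1,\boldsymbol{0}]$, $[\mathbf{a}_i,0,\mathbf{e}_i]$, and identify $\dot{\mathbf{a}}_i=J_{\mathbf{f}}\mathbf{a}_i+\mathbf{h}$ as the variational (sensitivity) equation with $\mathbf{h}=\frac{\partial\mathbf{f}}{\partial\mathbf{u}}\frac{\partial\mathbf{u}_{\boldsymbol{\kappa}}}{\partial\kappa_i}$. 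The only cosmetic difference is that the paper differentiates the integral form of the solution under the integral sign and then takes $\frac{d}{dt}$, whereas you differentiate the ODE in $\kappa_i$ and interchange mixed partials, explicitly invoking smooth dependence on parameters (a regularity point the paper leaves implicit).
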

\vspace{-10pt}
where $J_\mathbf{f}$ is the Jacobian of $\mathbf{f}$ and $\mathbf{h}$ a control term. A case that will be of particular interest to the rest of this paper is that of RNNs. In this case the metric is given by the following theorem.

\begin{corollary}\label{thm:rnn-metric}
    Consider the RNN dynamical system: $\mathbf{\dot x} = \mathbf{f}(\mathbf{x}, \mathbf{u}(t))=W\phi(\mathbf{x}) - \mathbf{x} + B\mathbf{u}(t)$ with $\mathbf{u}\in\mathcal{M}$, then the metric on the RNN manifold in a particular coordinate chart is given by:
    \[
    G_{\text{RNN}} = 
    \begin{bmatrix}
       ||\mathbf{f}||^2 & \mathbf{f}^TA\\
       A^T\mathbf{f} & A^TA
    \end{bmatrix}, \quad
        \dot A = (W\phi'(\mathbf{x}) - I) A + B\frac{d\mathbf{u}_{\boldsymbol{\kappa}}}{d\boldsymbol{\kappa}}(t), \quad A(0) = 0\in\mathbb{R}^{n \times m}
    \]
\end{corollary}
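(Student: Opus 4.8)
The plan is to obtain this as a direct specialisation of Theorem \ref{thm:adjoint} to the RNN vector field $\mathbf{f}(\mathbf{x},\mathbf{u}) = W\phi(\mathbf{x}) - \mathbf{x} + B\mathbf{u}$, so that almost all the work reduces to (i) reading off the pullback metric from the state-space tangent vectors and (ii) identifying the Jacobian $J_{\mathbf{f}}$ and the control term $\mathbf{h}$ for this particular $\mathbf{f}$. First I would fix the coordinate chart $(t,\kappa_1,\dots,\kappa_m)$ on $\mathcal{N}$ inherited from $\mathcal{M}\times\mathbb{R}_+$ through $\varphi$. By Corollary \ref{cor:immersion} the images of the coordinate basis vectors under $\mathrm{d}\varphi$ are $\mathrm{d}\varphi\frac{\partial}{\partial t} = \dot{\mathbf{x}} = \mathbf{f}$ and $\mathrm{d}\varphi\frac{\partial}{\partial\kappa_i} = \frac{\partial\mathbf{x}}{\partial\kappa_i} =: \mathbf{a}_i$, the columns of $A = [\mathbf{a}_1,\dots,\mathbf{a}_m]\in\mathbb{R}^{n\times m}$.

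Collecting these into $J = [\mathbf{f}, A]\in\mathbb{R}^{n\times(m+1)}$, the pullback of the Euclidean metric on the state-space $\mathbb{R}^n$ is the Gram matrix $G_{\text{RNN}} = J^\top J$, exactly as in Theorem \ref{thm:adjoint} specialised to the dot product on $\mathbb{R}^n$. Its leading entry is $\mathbf{f}^\top\mathbf{f} = \|\mathbf{f}\|^2$, its off-diagonal block is $\mathbf{f}^\top A$ (with transpose $A^\top\mathbf{f}$), and its lower-right block is $A^\top A$, which is precisely the claimed block form. This step is purely a matter of writing the Gram matrix of $\{\mathbf{f},\mathbf{a}_1,\dots,\mathbf{a}_m\}$ and needs no computation beyond identifying the tangent vectors.

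The remaining content is the adjoint equation for $A$. The Jacobian of $\mathbf{f}$ in $\mathbf{x}$ is $J_{\mathbf{f}} = \frac{\partial}{\partial\mathbf{x}}\bigl(W\phi(\mathbf{x}) - \mathbf{x} + B\mathbf{u}\bigr) = W\,\mathrm{diag}(\phi'(\mathbf{x})) - I$, abbreviated in the statement as $W\phi'(\mathbf{x}) - I$. To pin down $\mathbf{h}$ I would differentiate the dynamics $\dot{\mathbf{x}} = W\phi(\mathbf{x}) - \mathbf{x} + B\mathbf{u}_{\boldsymbol{\kappa}}(t)$ with respect to $\kappa_i$ and interchange $\frac{\partial}{\partial t}$ with $\frac{\partial}{\partial\kappa_i}$, which yields the variational equation $\dot{\mathbf{a}}_i = \bigl(W\,\mathrm{diag}(\phi'(\mathbf{x})) - I\bigr)\mathbf{a}_i + B\frac{\partial\mathbf{u}_{\boldsymbol{\kappa}}}{\partial\kappa_i}(t)$; here the only $\boldsymbol{\kappa}$-dependence of $\mathbf{f}$ enters linearly through the input channel, so the control term is $\mathbf{h} = B\frac{\partial\mathbf{u}}{\partial\kappa_i}$. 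Stacking over $i$ gives $\dot A = (W\phi'(\mathbf{x}) - I)A + B\frac{d\mathbf{u}_{\boldsymbol{\kappa}}}{d\boldsymbol{\kappa}}(t)$. Finally, since $\mathbf{x}(0) = \mathbf{x}_0$ does not depend on $\boldsymbol{\kappa}$, differentiating the initial condition gives $\mathbf{a}_i(0) = \frac{\partial\mathbf{x}_0}{\partial\kappa_i} = 0$, hence $A(0) = 0\in\mathbb{R}^{n\times m}$.

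The only non-mechanical point is justifying the interchange of $\frac{\partial}{\partial t}$ and $\frac{\partial}{\partial\kappa_i}$, which requires the flow to depend $C^1$-smoothly on the parameter $\boldsymbol{\kappa}$. This follows from the standard smooth-dependence-on-parameters theorem for ODEs whenever $\mathbf{f}$ and $\mathbf{u}_{\boldsymbol{\kappa}}$ are $C^1$ (here $\mathbf{f}$ is smooth as long as $\phi$ is), and it is already subsumed in the hypotheses of Theorem \ref{thm:adjoint}. Granting that regularity, the corollary is a direct substitution and I expect no further obstacle.
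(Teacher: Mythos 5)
Your proof is correct and takes essentially the same route as the paper: the paper gives no separate proof of this corollary, treating it as a direct specialisation of Theorem \ref{thm:adjoint}, whose supplementary proof derives the same adjoint/variational equation with $J_{\mathbf{f}} = W\,\mathrm{diag}(\phi'(\mathbf{x})) - I$ and $\mathbf{h} = B\frac{\partial \mathbf{u}}{\partial \kappa_i}$ (obtained there by differentiating the integral form of the solution rather than by interchanging $\partial_t$ and $\partial_{\kappa_i}$, an equivalent step with the same regularity requirement). One remark: your Gram-matrix reading $G_{\text{RNN}}=[\mathbf{f},A]^\top[\mathbf{f},A]$, i.e.\ the pullback from the neural state space $\mathbb{R}^n$, is what matches the corollary's stated block form, whereas a literal substitution into the theorem's $UU^T$ formula (whose $U$ carries the extra $[1,\boldsymbol{0}]$ and $[\boldsymbol{0},I]$ coordinate blocks of the augmented space) would add $+1$ and $+I$ to the diagonal blocks --- a discrepancy internal to the paper that your interpretation silently, and correctly, resolves.
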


\textbf{Low-rank RNNs constrain the embedding dimensionality of the manifold.} Recent work has focused on studying dynamical systems with low-rank weights\cite{thibeault2024low}. In particular, it has been shown that the state of low-rank RNNs is naturally constrained to a low-dimensional space\cite{mastrogiuseppe2018linking}. Indeed, if the weights of an RNN are rank $r$, meaning that $W=\sum_{i=1}^r \boldsymbol{\alpha}_i\boldsymbol{\beta}_i^T$, the dynamics of the RNN upon receiving a $d$-dimensional input $\mathbf{u}(t)$ can be written as $\mathbf{\dot x}=-\mathbf{x}+\sum_{i=1}^r \boldsymbol{\alpha}_i (\phi(\mathbf{x})\cdot \boldsymbol{\beta}_i) + \sum_{i=1}^d \mathbf{b}_i u_i(t)$ which is a linear combination of the $\boldsymbol{\alpha}’s$ and $\mathbf{b}$’s (plus a leak term). This means that the state of the system will be mostly constrained to the $r+d$-dimensional subspace spanned by the $\boldsymbol{\alpha}_i$'s and $\mathbf{b}_i$'s, with rapidly decaying dynamics in all other directions. Thus, the low-rank RNN framework places constraints on the linear embedding dimensionality of the state while our framework places constraints on the intrinsic dimensionality of the nonlinear manifold of states. The two frameworks could naturally be combined in order to design RNNs with specific intrinsic and embedding dimensionalities, by choosing the proper input manifold and rank of the weights.

\textbf{Attractors have rank-deficient metrics.} Another common geometric perspective on dynamical systems comes from studying \textit{attractor} manifolds. Here we show how the metric can be used to understand the computational mechanisms behind a classic attractor RNN model\cite{wong2006recurrent}. This RNN consists of three units receiving two constant-in-time inputs (Fig. \ref{fig:dynamic}\textbf{b}). The two inputs are correlated, such that the set of all possible inputs forms a line manifold. Starting at a fixed initial state the network draws a trajectory which depends on the value taken by the input before settling on a line of fixed points (Fig. \ref{fig:dynamic}\textbf{c}). At time $t=0$, the state of the system is independent of the input and therefore the metric is rank-deficient with a single non-zero entry given by $G_{tt} = \mathbf{f}(\mathbf{x}, \mathbf{u}(0))^T \mathbf{f}(\mathbf{x}, \mathbf{u}(0))$ where $\mathbf{\dot x}=\mathbf{f}(\mathbf{x}, \mathbf{u}(t))=W\phi(\mathbf{x}) - \mathbf{x} + \mathbf{u}(t)$ and zero entries $0=G_{tu}=G_{ut}=\mathbf{a}(0)^T\cdot \mathbf{f}=\boldsymbol{0}\cdot \mathbf{f}$ and $0=G_{uu}=\mathbf{a}(0)^T\cdot \mathbf{a}(0) = \boldsymbol{0}\cdot \boldsymbol{0}$. When $t\rightarrow \infty$, because of the way the weight matrix $W$ is chosen, the system converges to steady-state meaning that $G_{tt}\rightarrow 0$. However, the state to which is converges is dependent on the value taken by the input and therefore $G_{uu}>0$ (Fig. \ref{fig:dynamic}\textbf{d}).

\uline{Thus, dynamical systems lie on low-dimensional manifolds whose topology is constrained by their inputs and whose geometry reflects the computations they perform. Over the next sections we show how this provides novel insights into classic RNN models.}

\begin{figure}[H]
    \centering\includegraphics[width=\linewidth]{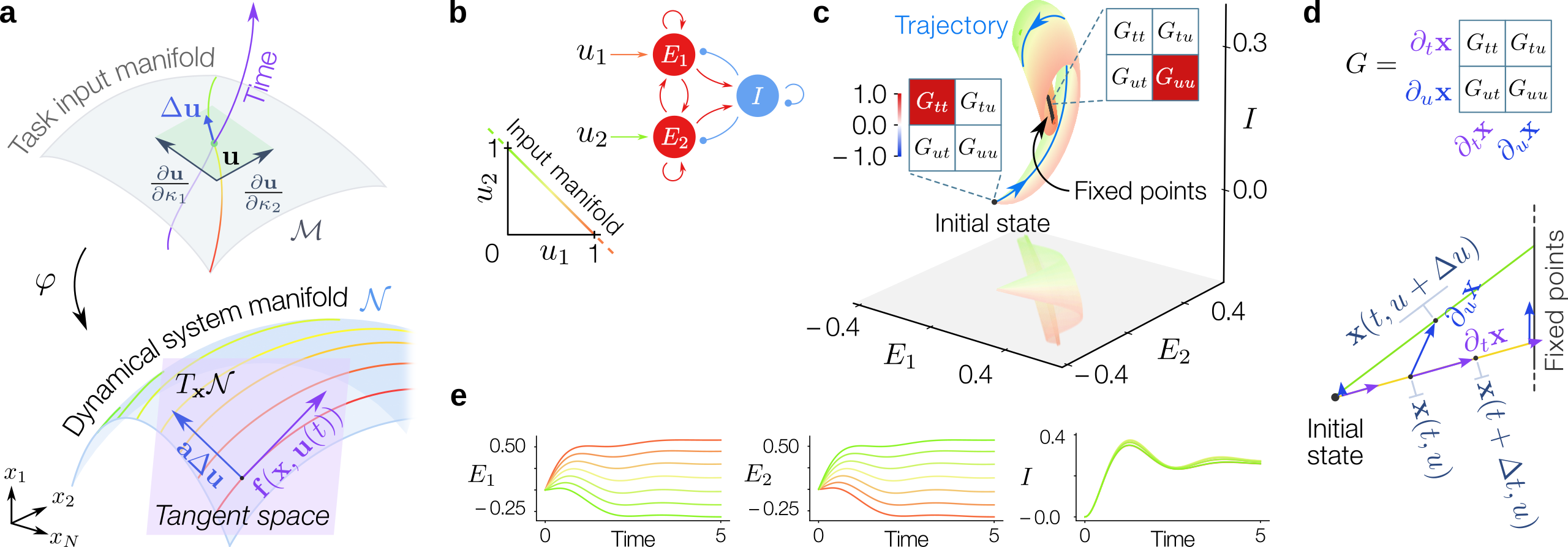}
    \vspace{-\abovecaptionskip}
  \vspace{-12pt}
    \caption{\textbf{Dynamical systems' states lie on manifolds whose geometry capture task computations.} \textbf{a.} Schematic of how a manifold of time-varying inputs generates a dynamical system manifold. Each point $\mathbf{u}$ on the input manifold $\mathcal{M}$ is a time-varying function. The integral function $\varphi$ maps this input function to the time-varying solution to the dynamical system. The tangent space of the dynamical system manifold is thus spanned by two vectors corresponding to small changes in the input parameters $\boldsymbol{\kappa}$ (given by the adjoint $\mathbf{a}$) and small changes in the time point $t$ (given by $\mathbf{f}$). \textbf{b.} E-I network of decision making, whose constant-in-time inputs lie on a line manifold. \textbf{c.} The metric captures the transition from the initial state to a line of attractor fixed points. \textbf{d.} The metric is the inner product between the tangent vectors. \textbf{e.} Single-neuron activity across different trajectories.}
    \label{fig:dynamic}
\end{figure}

\vspace{-15pt}

\section{Contextual inputs warp neural manifolds} \label{section:mante}

Biological and artificial agents must flexibly adapt to changes in task context to focus on relevant information while ignoring irrelevant inputs. Contextual evidence integration is a classic neuroscience task which has been modelled extensively using RNNs\cite{mcmahan2021learning,dubreuil2022role,valente2022extracting}. Previous work has analysed this task by studying local linearised dynamics about fixed points\cite{mante2013context}. In particular, it has been suggested that RNNs solve the task via line attractor manifolds. Here, we provide a fully nonlinear analysis of the dynamics and geometry of the computations underlying the solution to this task in RNNs. In particular, we show that the activity lies on a separate three-dimensional manifold for each context and that, over time, the manifold warps to compress the irrelevant information.

\textbf{Stochastic differential equations model noisy evidence integration.} We trained an RNN on a classic contextual binary evidence integration task with dynamics:
    \[
        \mathbf{dx} = (W\phi(\mathbf{x}) - \mathbf{x} + \mathbf{b}_1u_1 + \mathbf{b}_2u_2 + \mathbf{c}_1\mathds{1}_{ctx=1} + \mathbf{c}_2\mathds{1}_{ctx=2})dt + B d\mathcal{W}, \quad \mathbf{x}(0) = \boldsymbol{0}
    \]
where $\mathbf{\mathbf{x}}(t)\in\mathbb{R}^n$ and $B = [\mathbf{b}_1, \mathbf{b}_2]$ such that $d\mathcal{W}\in\mathbb{R}^{2\times n}$ is a Wiener process increment. In each context $\text{ctx}=i$, the network is trained to output the sign of the relevant input $u_i$ at the final time point $t=10$:
\vspace{-5pt}
    \[
        l_{ctx}(W, A, D) = \textstyle\left|\left| y(t=10)-\text{sign}(u_1\mathds{1}_{\text{ctx}=1} + u_2\mathds{1}_{\text{ctx}=2})\right|\right|^2, \quad \text{ctx}\in \{1, 2\}, u_i\in [-0.2, 0.2]
    \]  
where $y(t)=D\phi(\mathbf{x}(t))$ and $D\in\mathbb{R}^{1\times n}$ is a decoder. Thus, the network must learn to discard the irrelevant input in each context. The input manifold is generated by varying the two inputs ($u_1, u_2$) for a fixed realisation of the noise (Fig. \ref{fig:mante}\textbf{a}; \textit{right}).

\textbf{The network solves the task by suppressing output variability in response to irrelevant inputs.} For a given context, the input naturally lies on a two-dimensional manifold defined by the coordinates $u_1$ and $u_2$ (Fig. \ref{fig:mante}\textbf{a} \textit{left}). Thus, the neural activity lies on a three-dimensional manifold generated by the two input dimensions and the time direction. Furthermore, switching between contexts generates two such manifolds (Fig. \ref{fig:mante}\textbf{b}). As in the previous section, the tangent space of this manifold has a natural basis given by $\partial_{t}\mathbf{x}, \partial_{u_1}\mathbf{x}, \partial_{u_2}\mathbf{x}$. To understand how the network processed relevant and irrelevant information in each context, we asked how the readout of the network $\mathbf{y}$ changed when varying the state of the network along each basis vector. We found that, as time progresses, varying the relevant, but not irrelevant, input leads to a large change in the readout (Fig. \ref{fig:mante}\textbf{c}). We further illustrate this by plotting the output of the network under a fixed realisation of the noise process while varying the irrelevant input (Fig. \ref{fig:mante}\textbf{d}). 

\textbf{The metric warps the manifold to discard irrelevant inputs.} Thus, the trained network is able to flexibly discard irrelevant information in each context. We next sought to understand the geometry behind this computation. To this end, we visualised the manifold at different time points in a two-dimensional subspace spanned by two neurons. We found that as time progresses, the direction on the manifold corresponding to the irrelevant input was compressed (Fig. \ref{fig:mante}\textbf{e}). To verify this insight at the network level, we computed the metric over the three intrinsic dimensions of the manifold. We found that, initially, the metric's diagonal entries were roughly equal, suggesting an equal sensitivity to changes in the two inputs and the time dimension. In contrast, near the readout time, i) the time component of the metric had converged to near zero, suggesting that the network had reached steady state, and ii) depending on the context, the component of the metric corresponding to the irrelevant input had significantly decreased, suggesting that the irrelevant input no longer influenced the state of the network (Fig. \ref{fig:mante}\textbf{e}). This confirmed that, as time progresses, the neural manifold is compressed along the irrelevant input direction. To visualize this, we computed geodesics under the metric at early and late time points, which confirmed that space had been warped on the manifold, both by compressing the irrelevant direction and by stretching the manifold near the decision boundary along the relevant direction (Fig. \ref{fig:mante}\textbf{f}). Moreover, as time progressed, two of the eigenvalues of the metric --- corresponding to the time and irrelevant input component --- decayed to zero (Fig. \ref{fig:mante}g), suggesting that the manifold had become closer to one-dimensional by the decision time.

\begin{figure}[!ht]
    \centering
    \includegraphics[width=\linewidth]{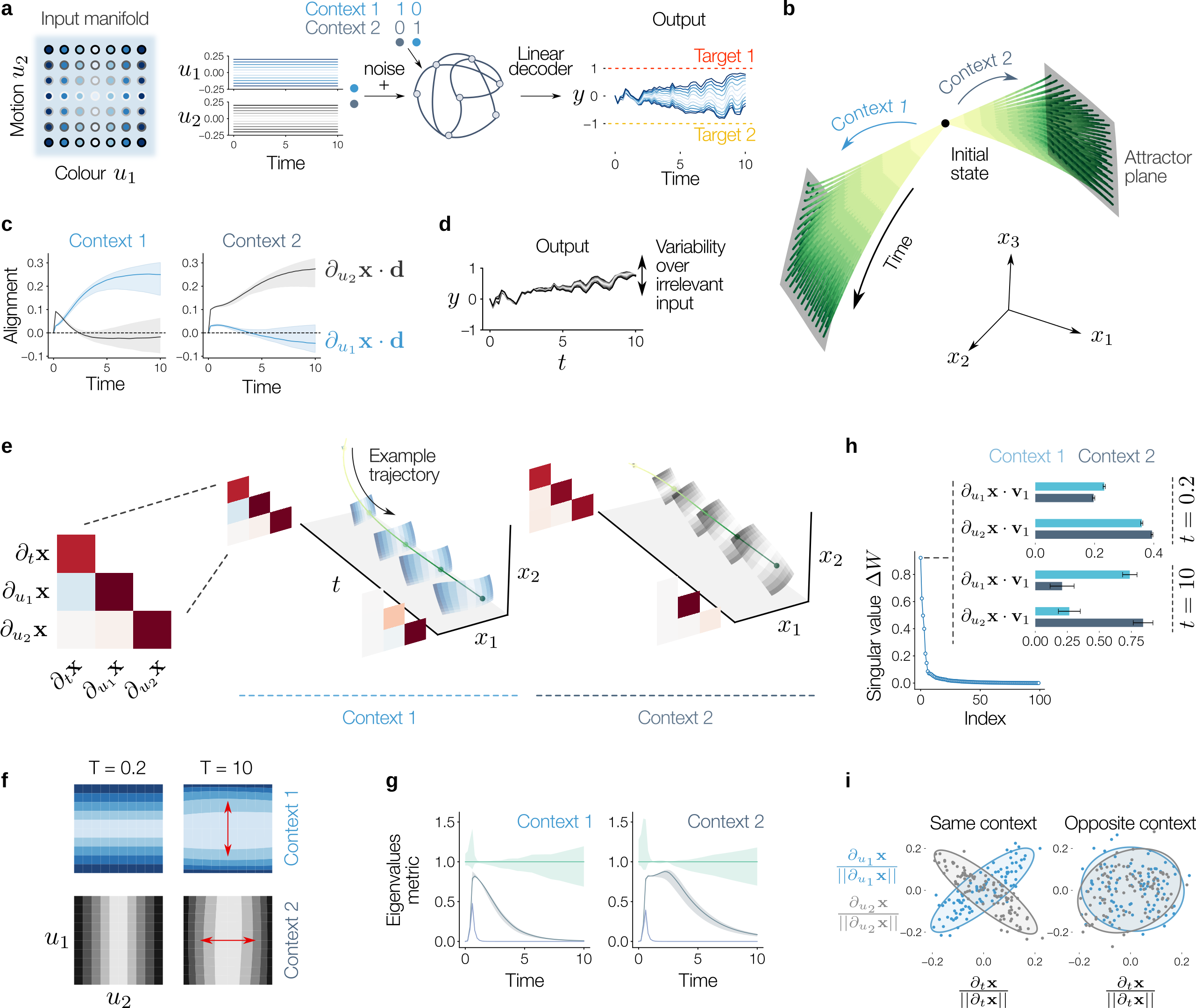}
    \vspace{-10pt}
    \caption{\textbf{Neural manifold warping of irrelevant inputs during contextual evidence integration.} \textbf{a.} Task schematic: the task consists of two noisy inputs at different expected magnitude. Depending on the contextual cue, the network has to output the sign of the relevant input. Right: example outputs under the same noise realisation across different magnitudes of the relevant input. \textbf{b.} Because the input lies on a $2$D manifold, neural activity lies on a $3$D manifold (here shown in the space of three neurons). Further, there are two manifolds, one for each context. These manifolds are bounded by planes of attractor fixed points (see panel e). 
    \textbf{c.} Changes in the relevant (but not irrelevant) input lead to changes in the decoder output at late (but not early) time points. Error bars are across the irrelevant input. 
    \textbf{d.} This ensures that the output is not affected by variability in the irrelevant input. 
    \textbf{e.} $2$D time-slices of the manifold. The metric goes from being everywhere diagonal with equal diagonal entries at early time points to having a near-zero time component (because the network has converged to an attractor state) and a dominant input space component that depends on the context. \textbf{f.} Geodesic gridlines under the pull-back of the metric at two different time points highlights that space becomes stretched near the decision boundary along the relevant input \textbf{g.} Eigenvalue of the metric (normalised to $1$, error bars across the irrelevant input). The manifold becomes pseudo-Riemannian at large times, such that only the largest eigenvalue (corresponding to the relevant input) is large. \textbf{h.} Alignment of the largest eigenvector of the weight updates with each basis vector of the tangent space. Over time the largest eigenvector becomes more aligned with the relevant input. \textbf{i.} The change in activation of neurons following a change along each basis vector of the tangent space follow a Gaussian distribution. The covariance of this distribution is correlated/anti-correlated between the time- and relevant-input but not between the time- and irrelevant-input.}
    \label{fig:mante}
    \vspace{-10pt}
\end{figure}

\textbf{The basis of the tangent space provides insight into neural connectivity and responses.} Finally, we sought to relate these geometric results to the connectivity and dynamics of the network. Consistent with recent work suggesting that RNNs trained on low-dimensional tasks have low-rank connectivity\cite{dubreuil2022role, valente2022extracting}, we found that the changes in weights between the first and last training iteration $\Delta W = W_K - W_0$ (and the weights themselves, \ref{sup:subsection:mante}) were low-rank (Fig. \ref{fig:mante}\textbf{h}). This suggested that the dynamics of the system could be understood in terms of a small number of dynamical modes.

Classic work based on linear(ised) models of the task has argued that context-dependent alignment of dynamical modes, or ``selection vectors" (left eigenvectors of the weight matrix), with relevant inputs underlies the solution to the task\cite{mante2013context}. We asked whether a geometric analysis of the manifold could reveal a similar mechanism. Indeed, we found that the tangent vectors corresponding to each input were initially both partially aligned to the leading right singular vector of the weight matrix, but over time the alignment of the relevant input's tangent vector increased while the irrelevant input's tangent vector decreased (Fig. \ref{fig:mante}\textbf{h}). Thus, context-dependent alignment of recurrent dynamical modes can be captured through an analysis of the low-dimensional nonlinear geometry of the system, without linearisation around fixed point manifolds.

Finally, to understand how individual neurons participated in these low-dimensional dynamics, we asked how their responses were distributed with respect to the tangent space of the manifold. Individual neurons' responses were approximately Gaussian distributed along the tangent vectors, suggesting a highly distributed embedding of the low-dimensional manifold in neural state space (Fig. \ref{fig:mante}\textbf{i}). Furthermore, neurons participated in the relevant input and time directions in a correlated manner, while participation in the irrelevant input and time directions was uncorrelated, suggesting a highly structured embedding of the relevant input dimension of the neural manifold in state space together with a disordered embedding of the irrelevant input dimension.

\uline{Overall, our analysis suggests that dynamical neural representations can flexibly warp based on contextual cues to discard irrelevant information.}

\section{Working memory uses a hyper-torus with dynamic geometry}\label{section:wm}

\begin{wrapfigure}{r}{0.45\linewidth}
{
\setlength{\belowdisplayskip}{0pt}
  \setlength{\abovecaptionskip}{0pt}
    \centering
    \vspace{-10pt}
    \includegraphics[width=\linewidth]{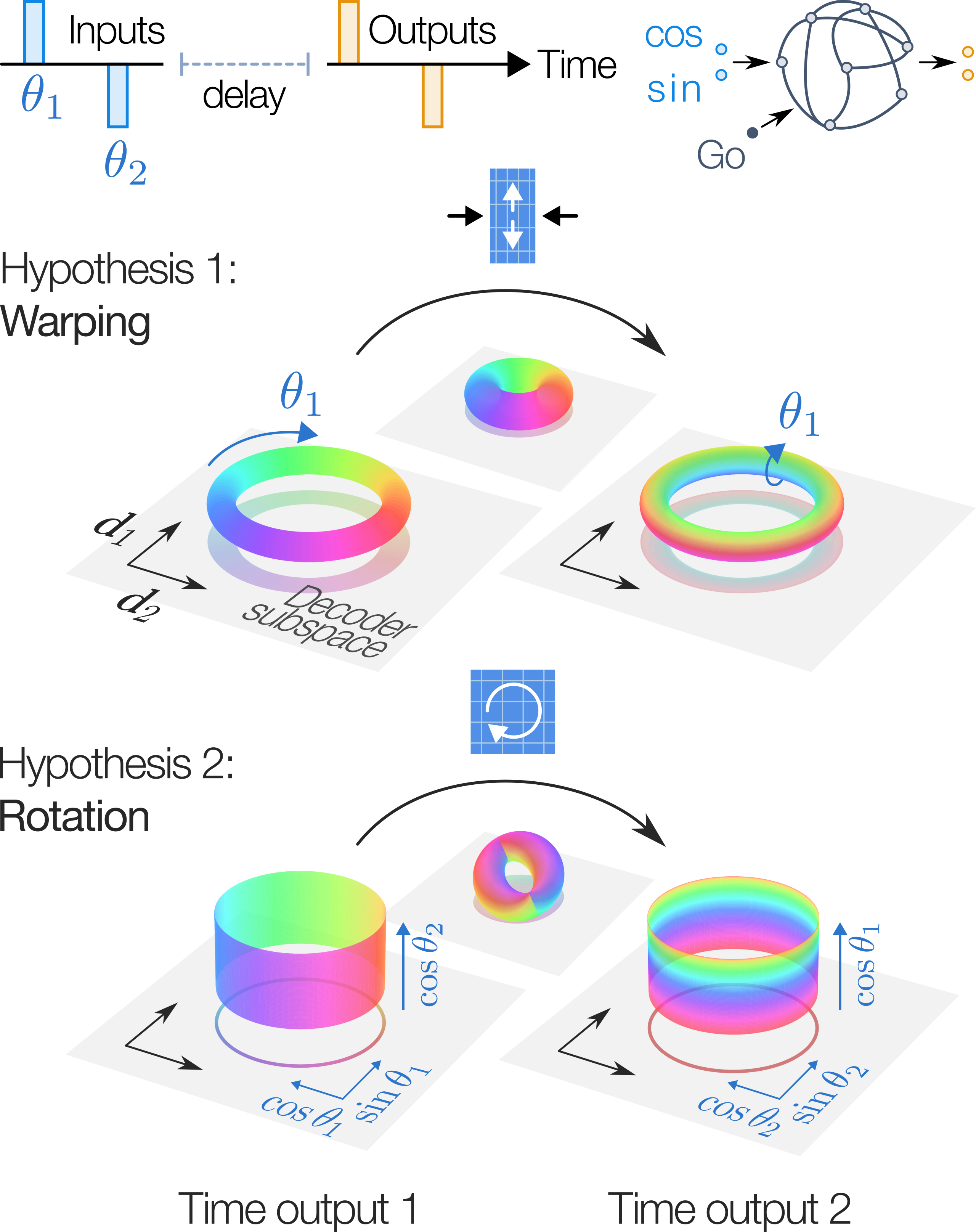}
  \vspace{-10pt}
    \caption{Task where a network has to remember two sequential inputs and then output them after a variable delay. Possible task solutions: \textbf{i) dynamically warping the torus} to compress the irrelevant input's representation or \textbf{ii) realigning the torus'} relevant input encoding direction to the decoder subspace while preserving a fixed intrinsic geometry.}
    \label{fig:wm-hypothesis}
    \vspace{-20pt}
}
\end{wrapfigure}

In the previous section we saw that contextual cues can warp the manifold to discard irrelevant inputs.
Here we hypothesize that this dynamic warping mechanism can be used to encode and retrieve a sequence of inputs.
Recent studies have characterised the neural geometry of biological\cite{xie2022geometry, panichello2024intermittent} and artificial\cite{panichello2021shared, cueva2021recurrent, liebe2025phase, ritter2025efficient} networks performing ``working memory'' tasks in which information is stored in the activation of a network and retrieved at a later time point.
Sequentially presented stimuli have been suggested to be stored in orthogonal subspaces in neural activity\cite{xie2022geometry}. 
In addition, neural representations of stimuli drawn from a continuous input manifold have been shown to be warped into a set of discrete attractors\cite{panichello2019error,panichello2021shared}.
However, how neural geometries encoding multiple stimuli are dynamically reoriented or warped during the encoding, delay and retrieval of working memories remains unclear. At two extremes, stimuli encoded in orthogonal subspaces may be retrieved sequentially by dynamically realigning each subspace to a fixed decoder, or the geometry of the neural representation may be dynamically warped to increase the projection of each subspace onto the decoder (Fig. \ref{fig:wm-hypothesis}).

\textbf{RNNs solve the task using a curved geometry.} 
We tested these hypotheses in RNNs trained to perform a sequential working memory task. The RNN sequentially received two inputs drawn from a circular manifold, and was required to output the same sequence after a variable delay (Fig. \ref{fig:wm-hypothesis} \textit{top}):
{
  \setlength{\belowdisplayskip}{5pt}
  \setlength{\abovedisplayskip}{5pt}
    \begin{gather*}
        \mathbf{\dot x} = W\phi(\mathbf{x}) - \mathbf{x} + B\mathbf{u}(t) + \text{go}_h(t)\mathbf{c}, \quad 
        L(W, A, D) = \textstyle\int_0^{T-h}||D\phi(\mathbf{x}(t+h)-\mathbf{u}(t)||^2dt
    \end{gather*}
}
with $\mathbf{u}(t)=[\cos\theta(t), \sin\theta(t)]$ where $\theta(t)$ was piece-wise constant, and $h\in [0.5, 3.5]$ was the delay period, ended by a pulse $\text{go}_h(t)$. Applying PCA to the activity at different points revealed a toroidal manifold with dynamic geometry (Fig. \ref{fig:wm-main}\textbf{a-b}). Yet this low-dimensional visualisation is insufficient to fully characterise the RNN's representational geometry, which requires analysis of the metric.

\textbf{The metric dynamically warps the torus to represent time-specific relevant information.} 
To test for warping of the neural manifold we computed the Gaussian curvature of the torus at different time points (Fig. \ref{fig:wm-main}\textbf{c}; \ref{sup:subsection:wm}). We found that the torus had a non-flat geometry, including both positive and negative curvature which was highly non-uniform over the torus (Fig. \ref{fig:wm-main}\textbf{c-d}). Furthermore, the metric revealed that: i) during the input period the torus' shape was formed ii) over the delay it remained stable and iii) during the retrieval phase the RNN selectively compressed the stimuli not immediately retrieved (Fig. \ref{fig:wm-main}\textbf{e}). Finally, we looked at how these changes in intrinsic geometry related to the extrinsic embedding of the torus in the state space. First, the decoder was selectively aligned to the basis vector encoding the relevant stimulus at each time point during retrieval (Fig. \ref{fig:wm-main}\textbf{f}) and the subspace spanned by the top two principal components was stable across the delay (Fig. \ref{fig:wm-main}\textbf{g}).

\textbf{Storing more memories requires higher-dimensional manifolds.} Finally, we ask how these results generalise to storing more memories. First, we proved (\ref{sup:subsection:wm}) that the activity of the RNN during the delay period lies on a hyper-torus whose dimension equals the number of items stored, and that, furthermore, to optimally encode the stimuli this torus must be embedded in the state space (as opposed to immersed). Computing the geodesic distance on the hyper-torus w.r.t. a reference combination of angles revealed how the hyper-torus was compressed along irrelevant directions at different time points to retrieve different memories (Fig. \ref{fig:wm-main}h-i). 

\uline{In sum, we mathematically show that sequential working memory relies on higher-dimensional representations. In RNNs, a combination of dynamical changes in the intrinsic and embedding geometry of the manifold occurs during retrieval of these memories.}

\begin{figure}[!ht]
    \centering
    \includegraphics[width=\linewidth]{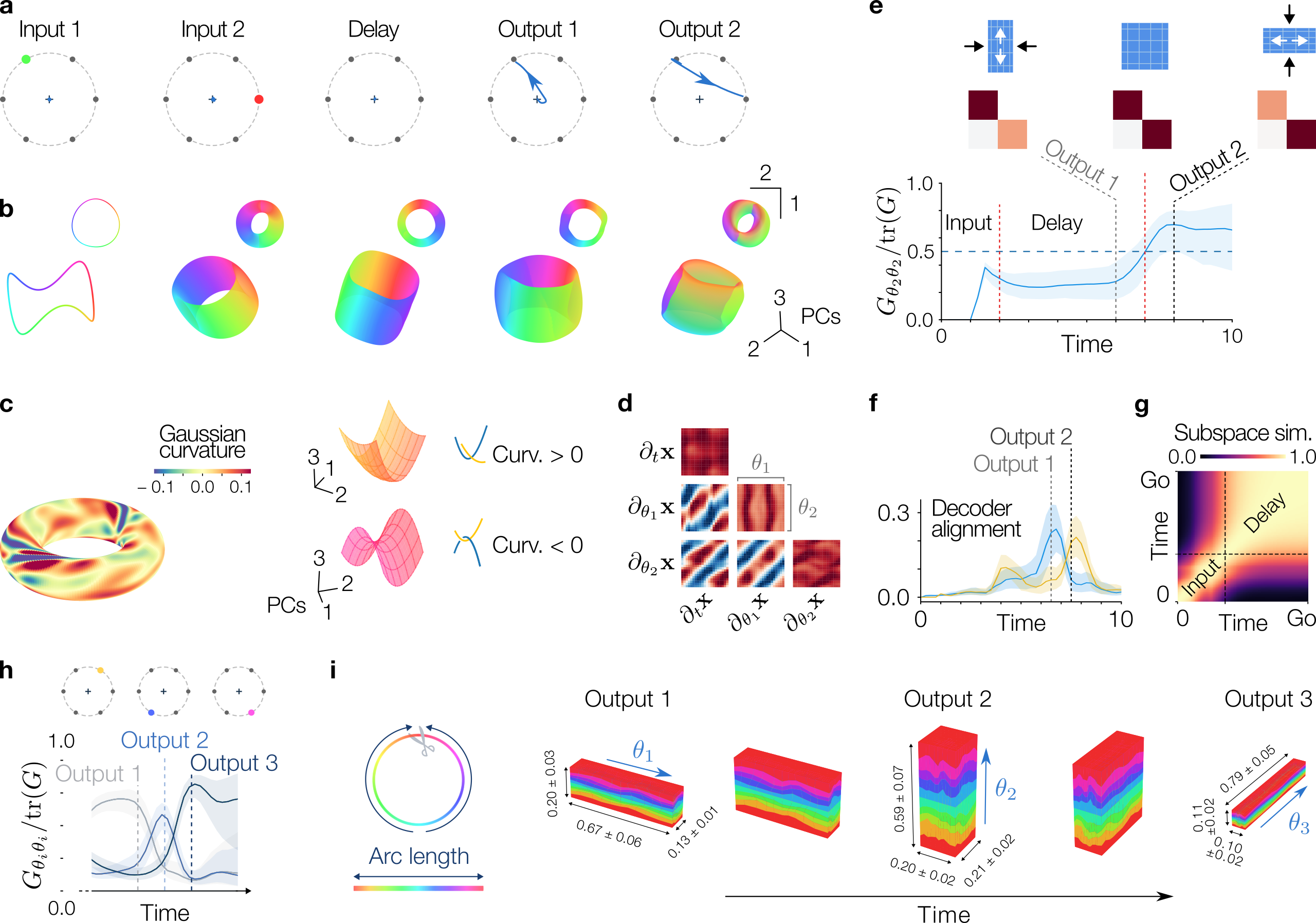}
    \vspace{-10pt}
    \caption{\textbf{Recalling memories requires dynamically warping their neural representations.} \newline\textbf{a.} Example of network input and output. \textbf{b.} Different time slices of the RNN manifold, shown projected on the first three principal components (PCs) (inset two PCs). After the first input comes on the state at any particular time point is lies on a 1D manifold encoding the input; after the second input comes on the state is constrained to a torus encoding both inputs. \textbf{c.} Gaussian curvature of the torus at output onset ($t=6$). The torus has positive and negative curvatures at different points. \textit{Right:} Two different pieces of the torus projected in the PC space displaying positive and negative curvatures at the start of the delay period ($t=2$). \textbf{d.} Metric of the RNN manifold at the start of the delay period ($t=2$). \textbf{e.} Average stretching of the torus, quantified by the ratio of one of the diagonal entries of the metric to its trace. \textit{Top:} Metric averaged over the torus. The torus goes from being warped along the encoding of the first then second angle during retrieval. \textbf{f.} Alignment of the basis of the tangent space to the decoder $||D \text{diag}(\phi'(\mathbf{x})) \partial_{\theta_i} \mathbf{x}||$. Blue is $\theta_1$ and yellow is $\theta_2$. \textbf{g.} Subspace similarity between the spaces spanned by the first two PCs as computed at any pair of time points. \textbf{h.} Ratio of individual entries of the diagonal of the metric to its trace. \textbf{i.} Arc length from a reference angle. The hyper-torus is stretched dynamically through the recall period to compress the two irrelevant angles.}
    \label{fig:wm-main}
\end{figure}

\clearpage
\section{Discussion}\label{section:conclusion}

\textbf{Conclusion.} In this work, we developed a mathematical framework extending previous applications of Riemannian geometry in machine learning to dynamical systems receiving time-varying inputs on low-dimensional manifolds. By deriving the pullback Riemannian metric on the manifold of system states, we showed that the geometry of a dynamical system reflects its computations, linking representational geometry to computation-through-dynamics. This framework thus provides a mathematical description of how dynamical systems transform input manifolds into neural activity manifolds whose topology and geometry are constrained by the structure of their inputs.

In Section 3, we used this framework to show that contextual inputs warp the geometry of neural manifolds in RNNs trained on decision-making tasks. Specifically, we found that the metric compresses directions corresponding to irrelevant inputs while preserving those corresponding to relevant task variables, enabling flexible context-dependent computations. In Section 4, we extended this analysis to sequential working-memory tasks and showed that RNNs encode and retrieve multiple memories through dynamic warping of a higher-dimensional manifold, specifically a hyper-torus whose intrinsic and embedding geometry evolve over time. Together, these results demonstrate that warping is a ubiquitous feature of the computations performed by RNNs across distinct task domains.

\textbf{Limitations.} The relationship between geometry, dynamics and connectivity is degenerate: the same geometry (i.e. the same metric) can arise through different network architectures. Future work could investigate whether networks with identical geometries but distinct connectivity and dynamics can implement distinct classes of computations, or instead constitute a degenerate class of solutions to the same problem. A second limitation is our analysis of the influence of noise on geometry. In particular, we treated noise in a \textit{pathwise} manner; we studied the manifold generated under a fixed realisation of the noise process. The distribution of manifolds generated by random noise realisations remains to be characterised, and could provide insight into the geometry of computations on stochastic data.

\textbf{Broader impacts.} Our framework is applicable to a wide range of dynamical systems. In particular, methods to infer dynamical systems from biological or physical data using machine learning tools have been widely adopted, and the Riemannian geometry of these systems could analysed to interpret the behaviour of these data-driven dynamical systems. In particular, the computational mechanisms identified here in task-trained RNNs could be investigated in RNNs fit directly to neural data in order to test hypotheses regarding the geometry of computation in large-scale neural recordings.

\clearpage

\AtNextBibliography{\small}
\printbibliography

\clearpage

\clearpage{\Large \bf Supplementary Materials}



\begin{refsection}

\renewcommand{\thesection}{S\arabic{section}}  

\renewcommand{\thefigure}{S\arabic{figure}}

\section{The representational metric of dynamical systems}\label{sup:section:math}

In this section we introduce the mathematical framework used throughout the main manuscript. In order for this work to be self-contained, we first briefly review the relevant concepts of differential and Riemannian geometry. To provide intuition in a mathematically simplified setting, we study deep neural networks as a series of geometric transformations, showing that this view can provide insights into the computations performed by the network. We then provide our main contribution: a full derivation of the geometry of dynamical systems' manifolds. Finally, we show a concrete application of these mathematical results for continuous-time RNNs.

\subsection{Differential geometry of deep neural networks}

The results of this subsection mostly follow from elementary differential geometry (e.g. we refer the reader to\cite{lee2003smooth}), and we therefore state them without derivation.


\textbf{The topological and differentiable structure of deep neural networks.} In this section we consider a generic depth-$l$ neural network of the form: 
\[
    \varphi_l=f_l \circ f_{l-1} \circ ... \circ f_1:\mathbb{R}^{n_0} \rightarrow \mathbb{R}^{n_l}
\]
where $f_k$ is the map from hidden layer $k-1$ to hidden layer $k$. For example, $f_k(\mathbf{z}_{k-1})=\text{tanh}(W_k\mathbf{z}_{k-1} + \mathbf{b})=\mathbf{z}_k$ where $\mathbf{z}_{k}\in\mathbb{R}^{n_k}$ is the vector of activations of the neurons in the $k$th hidden layer. Following previous work\cite{Hauser2017}, we consider the input to this network as living on a low-dimensional manifold:
\[
    \varphi_k(\psi(p)) = \varphi_k(\mathbf{x}) = \mathbf{z}_k, \quad p\in \mathcal{M}
\]
where $\psi$ is an embedding of $\mathcal{M}$ in $\mathbb{R}^{n_0}$. The following proposition gives a clear characterisation of the topology of the manifold over which $\mathbf{z}$ lies. In particular, if $\varphi_k :\mathbb{R}^{n_0}\rightarrow \mathbb{R}^{n_k}$ is i) continuous ii) invertible and iii) its inverse is continuous then it is a \textit{homeomorphism}.
\begin{proposition}
    Let $\mathbf{x}\in\mathbb{R}^{n_0}$ be constrained to $\psi(\mathcal{M})$, an embedding of $\mathcal{M}$ in $\mathbb{R}^{n_0}$, and let $\varphi_k$ be as defined above. If $\varphi_k$ is a homeomorphism then $\mathbf{z}_k=\varphi_k (\mathbf{x})$ is also constrained to $\mathcal{M}$.
\end{proposition}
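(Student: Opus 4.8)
The plan is to read ``$\mathbf{z}_k$ is constrained to $\mathcal{M}$'' as the statement that the image $\varphi_k(\psi(\mathcal{M}))\subseteq\mathbb{R}^{n_k}$ is an embedded copy of $\mathcal{M}$, i.e.\ homeomorphic to $\mathcal{M}$ and hence topologically indistinguishable from the input manifold. The entire argument rests on the single principle that homeomorphisms preserve topology and that this preservation passes to subspaces. Concretely, I would exhibit the required homeomorphism $\mathcal{M}\to\varphi_k(\psi(\mathcal{M}))$ as a composition of two simpler homeomorphisms onto their images.

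First I would unpack the two hypotheses. Since $\psi$ is an embedding of $\mathcal{M}$ into $\mathbb{R}^{n_0}$, by definition the corestriction $\psi:\mathcal{M}\to\psi(\mathcal{M})$ is a homeomorphism when $\psi(\mathcal{M})$ carries the subspace topology from $\mathbb{R}^{n_0}$. Separately, $\varphi_k:\mathbb{R}^{n_0}\to\mathbb{R}^{n_k}$ is assumed to be a homeomorphism, so it is continuous, bijective, and has continuous inverse $\varphi_k^{-1}$. The key lemma I would then invoke is that the restriction of a homeomorphism to a subspace is again a homeomorphism onto its image: taking $A=\psi(\mathcal{M})$, the map $\varphi_k|_A:A\to\varphi_k(A)$ is continuous as a restriction of a continuous map, it is a bijection onto $\varphi_k(A)$, and its inverse is the restriction $\varphi_k^{-1}|_{\varphi_k(A)}$, which is continuous because $\varphi_k^{-1}$ is. Hence $\varphi_k|_A$ is a homeomorphism onto $\varphi_k(\psi(\mathcal{M}))$, with subspace topologies on both sides.

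Finally I would compose. The map $\varphi_k\circ\psi:\mathcal{M}\to\varphi_k(\psi(\mathcal{M}))$ is a composition of two homeomorphisms onto their images, hence itself a homeomorphism onto its image; this is precisely an embedding of $\mathcal{M}$ into $\mathbb{R}^{n_k}$. Therefore every activation $\mathbf{z}_k=\varphi_k(\mathbf{x})=\varphi_k(\psi(p))$ lies on a set homeomorphic to $\mathcal{M}$, which is the claim. The one step I would treat most carefully — and the only place a sloppy argument could fail — is the restriction lemma: one must check that both $\varphi_k|_A$ and its inverse remain continuous with respect to the \emph{subspace} topologies rather than the ambient Euclidean ones, since it is the continuity of $\varphi_k^{-1}$ (not merely of $\varphi_k$) that guarantees $\varphi_k(\psi(\mathcal{M}))$ does not acquire extra topological structure such as self-intersections. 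Everything else is a formal composition of homeomorphisms.
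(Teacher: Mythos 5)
Your proof is correct: restricting the global homeomorphism $\varphi_k$ to the subspace $\psi(\mathcal{M})$ gives a homeomorphism onto its image (with inverse $\varphi_k^{-1}|_{\varphi_k(\psi(\mathcal{M}))}$), and composing with the embedding $\psi$ exhibits $\varphi_k\circ\psi$ as an embedding of $\mathcal{M}$ into $\mathbb{R}^{n_k}$, which is exactly the claim. The paper itself states this proposition without derivation (deferring to elementary differential topology), and your argument is precisely the standard one it implicitly relies on, including the key point that continuity of $\varphi_k^{-1}$ --- not just of $\varphi_k$ --- is what ensures the image carries the same topology as $\mathcal{M}$.
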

Importantly, $\varphi_k$ is a simple real function, and its invertibility and continuity can be verified via basic calculus. Thus, $\mathbf{x}$ and $\mathbf{z}$ are on the same manifold, simply represented differently in $\mathbb{R}^{n_0}$ and $\mathbb{R}^{n_k}$. 

In general, the invertibility and continuity of the network transformation are mostly independent of the parameters of the network. For example, if the network is of the form $f_k=\phi(W_k\mathbf{x}+\mathbf{b})$ where $\phi$ is an element-wise nonlinearity, then if $\phi$ is (element-wise) homeomorphic (e.g. tanh, softplus, leaky-ReLU, but not ReLU), and $W_k$ is full rank, and the network doesn't have a bottleneck layer, then these conditions are satisfied. In particular, $W_k$ is full rank with probability $1$ if it is initialised with entry-wise i.i.d. Gaussian or uniform weights. Thus, provided the rank of $W_k$ does not change over training, the topology of a trained network will match that at initialisation, and hence does not reflect the computations learned by the network. However, in some instances the topology may indeed change: recent work has highlighted the ubiquity of low-rank networks\cite{zangrando2024neural}, and depending on the objective, the learned weights may be of a rank lower than the minimum embedding dimensionality of the manifold --- trivially one could think of a network trained to map all its inputs to $0$. Nevertheless, in general it is the geometry of the neural representation that provides most insight into the computations learned by the network, as this geometry depends continuously on the parameters of the network whereas the topology is invariant to large classes of parameter changes.

There are multiple routes to characterising the geometry of a manifold. Here we will specifically consider \textit{smooth manifolds}. In particular, such smooth manifolds have a tangent space $T_p \mathcal{M}$ at any point $p\in\mathcal{M}$. 
\begin{proposition}
    Let $\varphi_k:\mathbb{R}^{n_0}\rightarrow \mathbb{R}^{n_k}$ differentiable with Jacobian $J_k\in\mathbb{R}^{n_k \times n_0}$. If $J_k$ is full matrix rank and $\psi$ is an embedding, then $\mathrm{d}(\varphi_k \circ \psi) =J_k\mathrm{d}\psi(\mathbf{v}): T_p\mathcal{M}\rightarrow T_{\varphi_k(\psi(p))}$ is surjective and $\mathcal{M}$ is an \textit{immersed submanifold} of $\mathbb{R}^{n_k}$
\end{proposition}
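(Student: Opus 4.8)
The plan is to reduce the submanifold claim to a statement about the rank of a single differential and then invoke the standard immersion theorem. Recall that a subset $S\subseteq\mathbb{R}^{n_k}$ is an immersed submanifold precisely when it is the image of an injective immersion, and that a smooth map is an immersion exactly when its differential is injective (equivalently, of rank equal to $\dim\mathcal{M}$) at every point. So it suffices to prove that the composite $\Phi:=\varphi_k\circ\psi:\mathcal{M}\to\mathbb{R}^{n_k}$ has injective differential at each $p\in\mathcal{M}$; the conclusion that $\Phi(\mathcal{M})$ is an immersed submanifold then follows immediately, with the smooth structure pushed forward from $\mathcal{M}$ along $\Phi$.

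First I would compute $\mathrm{d}\Phi_p$ by the chain rule: $\mathrm{d}\Phi_p = \mathrm{d}(\varphi_k)_{\psi(p)}\circ \mathrm{d}\psi_p = J_k(\psi(p))\,\mathrm{d}\psi_p$, viewing $J_k$ as the linear map $\mathbb{R}^{n_0}\to\mathbb{R}^{n_k}$. Because $\psi$ is an embedding it is in particular an immersion, so $\mathrm{d}\psi_p$ is injective and its image $V_p:=\mathrm{im}(\mathrm{d}\psi_p)$ is an $m$-dimensional linear subspace of $\mathbb{R}^{n_0}$, where $m=\dim\mathcal{M}$. Injectivity of $\mathrm{d}\Phi_p$ is then equivalent to injectivity of the restriction $J_k|_{V_p}$, i.e. to the transversality condition $V_p\cap\ker J_k=\{\mathbf{0}\}$.

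The step I expect to require the most care is precisely this transversality, since a full-rank but \emph{rectangular} Jacobian does not guarantee it in general. The relevant case is the no-bottleneck one flagged in the preceding discussion, $n_0\le n_k$, in which full rank means column rank $n_0$, so $J_k$ is injective and $\ker J_k=\{\mathbf{0}\}$; then $J_k|_{V_p}$ is automatically injective, and $\mathrm{d}\Phi_p=J_k\,\mathrm{d}\psi_p$ is a composition of injective linear maps, hence injective of rank $m$. This is where the argument is delicate: if instead there were a bottleneck ($n_k<n_0$), full rank would only yield a surjective $J_k$ with an $(n_0-n_k)$-dimensional kernel, and one would have to separately argue that $V_p$ meets that kernel only at the origin, which the hypotheses as stated do not provide.

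Having established injectivity of $\mathrm{d}\Phi_p$ for all $p$, the map $\Phi$ is an immersion; equivalently $\mathrm{d}\Phi_p$ carries $T_p\mathcal{M}$ isomorphically onto the $m$-dimensional subspace $T_{\Phi(p)}\Phi(\mathcal{M})\subset\mathbb{R}^{n_k}$, which is the surjectivity asserted in the statement. Finally I would invoke the standard result that the image of an injective immersion is an immersed submanifold of the ambient $\mathbb{R}^{n_k}$. Global injectivity of $\Phi$ itself—needed so that the image is a genuinely well-defined immersed submanifold rather than only a local one—follows from $\psi$ being an embedding (hence injective) together with the continuity and invertibility regularity on $\varphi_k$ inherited from the preceding proposition.
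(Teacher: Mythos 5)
The paper never actually proves this proposition: it is one of the results stated ``without derivation'' at the top of the supplementary section, with a pointer to elementary differential geometry (Lee). Your argument supplies precisely the standard proof that citation gestures at --- chain rule, injectivity of $\mathrm{d}\psi_p$ from the embedding hypothesis, injectivity of the composite differential, then the fact that the image of an injective immersion is an immersed submanifold --- so there is no difference of approach to report, only a difference in explicitness, and your write-up is correct where it commits to details. Two remarks. First, your point about the rectangular Jacobian is a genuine and correct catch: if $n_k < n_0$, ``full rank'' means $J_k$ is surjective with an $(n_0-n_k)$-dimensional kernel, and the proposition is then false as literally stated. For a concrete counterexample, take $\mathcal{M}=S^1$ with $\psi$ the inclusion into $\mathbb{R}^2$ and $\varphi_k(x,y)=x$; the Jacobian $[1\;\;0]$ has full rank, yet the composite $\theta\mapsto\cos\theta$ has vanishing differential at $\theta=0,\pi$ and its image $[-1,1]$ is not an immersed $1$-manifold. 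The proposition implicitly relies on the no-bottleneck condition ($n_0\le n_k$) discussed in the surrounding text, and your proof makes that dependence explicit, which is an improvement on the statement. Second, a caution on your final step: global injectivity of $\Phi=\varphi_k\circ\psi$ cannot be ``inherited from the preceding proposition,'' because the paper explicitly notes that the homeomorphism hypothesis and the full-rank hypothesis are not restrictions of one another; under the hypotheses actually stated here, $\varphi_k$ need not be injective on $\psi(\mathcal{M})$. If one adopts Lee's definition of immersed submanifold (image of an \emph{injective} immersion with pushed-forward structure), injectivity must be added as a hypothesis; otherwise the honest conclusion from the stated hypotheses is that $\Phi$ is an immersion, i.e.\ $\mathcal{M}$ immerses into $\mathbb{R}^{n_k}$, possibly with self-intersections --- which is presumably what the proposition intends and is everything your core argument establishes.
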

We note here that the differentiability and homeomorphicity conditions are overlapping but not restrictions of one another. In particular, even when $\varphi_k$ is not bijective $\mathrm{d}\varphi_k$ can still be surjective and therefore its Jacobian full rank. Similarly, $\varphi_k$ bijective and differentiable does not imply $\mathrm{d}\varphi_k$ surjective.

\textbf{Riemannian geometry provides insights into network computations.} So far the structure we've explored had no geometry, in that we did not define a notion of distance between points on the manifold. For this, we must add a \textit{metric} to the manifold. 
\begin{definition}
    A \textit{metric} $g$ is a function defining the inner product between vectors of the tangent space of a manifold. The metric is Riemannian if it satisfies: $g:T_p\mathcal{M}\times T_p\mathcal{M}\rightarrow \mathbb{R}$ such that $g(\mathbf{v}, \mathbf{v})>0$ if $\mathbf{v}\neq \boldsymbol{0}\in T_p\mathcal{M}$ and $g(\mathbf{v}_1, \mathbf{v}_2)=g(\mathbf{v}_2, \mathbf{v}_1)$
\end{definition}
Importantly such a metric allows defining the geodesic distance between points:
\begin{definition}
    Let $\gamma: [0, T]\rightarrow \mathcal{M}$ a smooth curve on $\mathcal{M}$ where $T\in\mathbb{R}$. In particular, this implies that $\dot\gamma(t)\in T_{\gamma(t)}\mathcal{M}$, and we can define the \textit{length} of $\gamma$ as:
    \[
        L(\gamma)=\int_0^T g(\dot\gamma(t), \dot\gamma(t))dt.
    \]
    Furthermore, the geodesic distance between $p_1\in\mathcal{M}$ and $p_2\in\mathcal{M}$ is defined as:
    \[
        d(p_1, p_2)=\text{inf}_{\gamma}L(\gamma), \quad \text{constraining }\gamma(0)=p_1, \gamma(T)=p_2
    \]
\end{definition}
This definition was central to our study of dynamical systems in the main text.

In figure \ref{fig:static-network} the manifold considered was the circle $\mathcal{M}=S^1$ and representing different points on the circle via their angle ($p=\theta$), the embedding of the circle in the 2-d input space was $\mathbf{x}=\psi(\theta)=[\cos(\theta), \sin(\theta)]$. However, we could have chosen a different embedding, say $\mathbf{x}=[(1+0.1\sin(\theta))\cos(\theta), \sin(\theta)]$, which in general will affect the geometry of the hidden representation $\mathbf{z}_k$. Indeed, suppose the target output is independent of the input geometry --- as is the case in figure \ref{fig:static-network} where it is only the point on the circle that determines the output --- then the network must perform different computations on these different input representations to perform the same output. 

To characterise these transformations, we can \textit{pull back} the usual metric defined on $\mathbb{R}^{n_k}$, which is the standard dot product, to the manifold $\mathcal{M}$.
\begin{definition}
    The \textit{pullback metric} is defined as:
    \[
        g(\mathbf{v}_1, \mathbf{v}_2) = (\mathrm{d}(\varphi_k \circ \psi)(\mathbf{v}_1)) \cdot (\mathrm{d}(\varphi_k \circ \psi)(\mathbf{v}_2)), \quad \mathbf{v}_1, \mathbf{v}_2\in T_p\mathcal{M}
    \]
    in particular $\mathrm{d}(\varphi_k \circ \psi)(\mathbf{v}_i)\in\mathbb{R}^{n_k}$ and $\cdot$ is the standard Euclidean dot product.
\end{definition}
Thus, we can define a metric on $\mathcal{M}$ by taking tangent vectors in $T_p\mathcal{M}$, pushing them forward through the network, where at the $k$th layer they become vectors of $T_p\mathbb{R}^{n_k}=\mathbb{R}^{n_k}$, such that we can then simply take their dot product. The value of this dot product is our definition of the inner product between the original tangent vectors in $\mathcal{M}$. The pullback metric represents how the tangent spaces of the manifold are transformed through the layers of the network and therefore the computations of the network. In particular we can more explicitly write the pullback metric as:
\[
    g(\mathbf{v}_1, \mathbf{v}_2)=(J_k\mathrm{d}\psi(\mathbf{v}_1)) \cdot (J_k\mathrm{d}\psi(\mathbf{v}_2))
\]
with
\[
    J_k=W_k\text{diag}(\phi'(\mathbf{z}_{k-1}))...W_1\text{diag}(\phi'(\mathbf{x}))\mathrm{d}\psi (\mathbf{v}_1)
\]
where $J_k\in\mathbb{R}^{n_k \times n_0}$ is the Jacobian of $\varphi_k$. This highlights the earlier claim that the geometry, as defined via the pullback metric, is directly dependent on the parameters $W_1, ..., W_l$ of the network. 

\textbf{Geodesic gridlines and computations.} In our networks the input metric was uniform, but in general data of the world maybe be unevenly distributed on the manifold, which will affect the learned parameters, and therefore the learned representation. In the case of our simple network the loss function was more formally defined as:
\[
    l=\int_{\mathcal{M}} ||\varphi_l(\psi(\theta))-\mathbf{y}_{target}||^2 \sqrt{\text{det} (h)}\mathrm{d}\theta
\]
where $h:T_p\mathcal{M}\times T_p\mathcal{M}$ is the metric on the input manifold. That is, some inputs are more heavily weighted in the loss if space is more spread around them. For example, in a probabilistic setting where instead of integrating over all possible inputs only a finite sample of them is drawn at each iterations, $\sqrt{\text{det} (h)}$ can be seen as a probability distribution (up to normalisation) on the input manifold --- so that some input-output pairs are drawn more frequently than others. In our case all angles were equal, so the input metric was uniform:
\[
    l=c\int_{\mathcal{M}} ||\varphi_l(\psi(\theta))-\mathbf{y}_{target}||^2 \mathrm{d}\theta
\]
for some $c\in\mathbb{R}_+$. But more generally non-uniform metrics on the manifold of input will lead to different learned representations since the loss is different. For example, a class being more frequent in a binary classification task may change the representation: if in figure \ref{fig:static-network} we had sampled more frequently angles $[0, \pi)$ than $(\pi, 2\pi)$, it would have led to an asymmetric stretching of space on each side of the class boundaries.  

Thus, the pullback metric $g$ gives insights into the learned representation, but the learned representation is dependent on the metric of the input manifold $h$. Using the definition of geodesic distance, both $g$ and $h$ can be used to measure the distance between points on the manifold. But what points to choose to measure the distance between? In \ref{fig:static-network}, \ref{fig:mante} and \ref{fig:wm-main} we showed that an insightful reflection of the computations done by the network was provided by the computing the geodesic distance using $g$ between points that were evenly apart under the $h$ metric. That is define points $p_1, p_2$ such that $d^h(p_1, p_2)=1$ and study $d^g(p_1, p_2)$ where $d^h$ and $d^g$ are the geodesic distance functions under the $h$ and $g$ metric respectively.

\textbf{Coordinates on the input manifold provide a natural input metric.} So far our treatment of the topology, smooth structure and geometry of deep neural networks has been relatively free of a choice of a coordinate system for $\mathcal{M}$. An $m$-dimensional manifold is defined as a collection of pieces of $\mathbb{R}^m$ glued together via the so-called transition functions. However, different choices of pieces of $\mathbb{R}^m$ can give rise to the same manifold. A trivial example is that of a line manifold: the open interval $(0, 3)$ and $(10, 16)$ are the same manifold, but endowed with different coordinates; there is an explicit diffeomorphism between them given by $F(p)=10+2p$.

In many cases, there is a meaningful coordinate system --- at least around a point --- on the input manifold. A manifold of images may for example be characterised by the position of an object in the image or its orientation\cite{aubry2015understanding}. It is insightful to understand how a neural network represents these different features. When a specific coordinate system is given, a natural matrix representation of the metric can be defined.
\begin{definition}
    A given coordinate system\footnote{The reader may be more familiar with the notation $(U, \varphi)$ for a chart, here we've simply written $\mathbf{x}=\varphi$.} $\mathbf{x}(p)=(x_1, x_2, ..., x_m)$ where $\mathbf{x}:\mathcal{M}\rightarrow \mathbb{R}^m$ induces a natural basis of the tangent space such that $\mathbf{v}\in T_p\mathcal{M}$ can be written as a linear combination of them:\footnote{This notation defines $\frac{\partial f}{\partial x_1}=(f\circ \gamma)\big\rvert_{t=1} $ for the smooth curve $\gamma(t)$ defined such that $x_i(\gamma)=(x_1(p),...,x_i(p)+t, ..., x_m(p))$ and a smooth function $f:\mathcal{M}\rightarrow \mathbb{R}$.} 
    \[
        \mathbf{v}=\sum_{i=1}^m v^i \frac{\partial }{\partial x_1}
    \]
    In this coordinate system inner products are given by:
    \[
        g(\mathbf{v}_1, \mathbf{v}_2) = \sum_{i,j=1}^m v^i_1G_{i,j}v^i_2
    \]
\end{definition}
This higlights that the metric in a particular coordinate system can also be written as a matrix $G\in\mathbb{R}^{m\times m}$ and $G_{i,j}=g(\frac{\partial }{\partial x_i}, \frac{\partial }{\partial x_j})$ are its entries.
Furthermore, if the manifold is embedded in the Euclidean space:
\[
    G=J^TJ\in \mathbb{R}^{m \times m}, \quad J=\left[\frac{\partial p}{\partial p_1}, ..., \frac{\partial p}{\partial p_m}\right]\in\mathbb{R}^{m \times m}
\]
Back to our deep neural network, the pullback metric changes from layer to layer as:
\[
    G_k = \mathrm{d}\psi(\mathbf{v})^TJ_k^T J_k\mathrm{d}\psi(\mathbf{v}) = \mathrm{d}\psi(\mathbf{v})^TJ_{f_1}^T...J_{f_k}^TJ_{f_k}...J_{f_1} \in \mathbb{R}^{m \times m}
\]
where $J_{f_i}\in\mathbb{R}^{n_i \times n_{i-1}}$ is the Jacobian of $f_i$. This means that:
\[
    G_k = \mathrm{d}(\varphi \circ \psi)(\mathbf{v})^T \mathrm{d}(\varphi \circ \psi)(\mathbf{v}) \in \mathbb{R}^{m \times m}
\]
Integrating along a curve on the manifold under the pullback metric in coordinates thus gives:
\[
    L(\gamma) = \int_0^T g(\dot\gamma(t), \dot\gamma(t))dt = \int_0^T \mathrm{d}(\varphi\circ\psi)(\dot\gamma(t))^T G_k \mathrm{d}(\varphi\circ\psi)(\dot\gamma(t))^T dt
\]
Where now $\mathrm{d}(\varphi\circ\psi)(\dot\gamma(t))$ is a usual derivative of a real function, and $G_k$ a real matrix, which can both be obtained using standard calculus tools (e.g. auto-differentiation).

\subsection{Dynamical systems' topology}

We now generalise the theory of the previous subsection to dynamical systems, where the inputs are time-varying functions. As we shall see, a manifold of functions naturally gives rise to a manifold of functions that are solutions to the dynamical system. Much of the work will lie in viewing this manifold of solutions as a submanifold of the state space.

\textbf{The topology of the input manifold constrains the topology of the solution to a dynamical system.} Here we consider a more general setting than in the main manuscript where the dynamical system is defined on a manifold. That is:
\[
    \mathbf{\dot x}(t) = \mathbf{f}(\mathbf{x}(t), \mathbf{u}(t)), \quad \mathbf{x}(t)\in\mathcal{R}^n, \mathbf{u}(t)\in\mathcal{U}^d
\]
Furthermore, consider a manifold $\mathcal{M}$ of input functions. That is $\mathbf{u}\in\mathcal{M}$ and $\mathbf{u}(t)\in\mathcal{U}$ where $\mathcal{U}$ is the ambient space in which the input is embedded at any particular time point. To define such a \textit{manifold of functions} (or more precisely of curves) we rely on the following construction:
\begin{definition}
    A \textit{manifold of real curves} is a manifold $\mathcal{M}$ such that $(\mathbf{u}, t)\in \mathcal{M}\times \mathbb{R}$ for $\mathbf{u}(t)\in\mathcal{U}$, and $\mathbf{u}\in\mathcal{M}$ is $\mathbf{u}=\pi(\mathbf{u}, t)$ for $\pi$ the projection map.
\end{definition}
On a particular chart, the function can be parametrised:
\[
    \mathbf{u}_{\boldsymbol{\kappa}}(t), \kappa\in \mathbb{R}^m
\]
For example, in section \ref{section:mante} of the main manuscript we've considered $\mathbf{u}_{\boldsymbol{\kappa}}(t)=[\kappa_1, \kappa_2, \mathds{1}_{\text{ctx}=1}, \mathds{1}_{\text{ctx}=2}]$ where $\kappa_1, \kappa_2$ were the levels motion and colour evidence. This input, fixing one of the two discrete contexts, indeed lies on a $2$-dimensional manifold with planar topology as $\boldsymbol{\kappa}\in\mathbb{R}^2$.
We formulate a stronger version of the theorem of the main text:
\begin{theorem}\label{thm:topology}
    Consider the dynamical system:
    \[
        \mathbf{\dot x}(t) = \mathbf{f}(\mathbf{x}(t), \mathbf{u}(t)), \quad \mathbf{x}(t)\in\mathcal{R}^n, \mathbf{u}(t)\in\mathcal{U}^d, \mathbf{u}\in\mathcal{M}^m
    \]
    where $\mathbf{f}$ is at least once differentiable in its arguments, then $\mathbf{x}\in\mathcal{N}^{m+1}$ (the curve) and $\mathcal{N}^{m+1}$ has the topology of $\mathcal{M}^m\times \mathbb{R}$.
\end{theorem}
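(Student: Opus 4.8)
The plan is to realise $\mathcal{N}$ as the image of the solution operator $\varphi:\mathcal{M}\times\mathbb{R}_+\to\mathcal{R}^n$ from Corollary~\ref{cor:immersion}, and to transport the product topology of $\mathcal{M}\times\mathbb{R}$ through $\varphi$. First I would fix the initial condition $\mathbf{x}_0$ and observe that, because $\mathbf{f}$ is $C^1$ and hence locally Lipschitz in $\mathbf{x}$, the Picard--Lindel\"{o}f theorem supplies, for each input curve $\mathbf{u}\in\mathcal{M}$, a unique maximal solution $t\mapsto\mathbf{x}(t)=\varphi(\mathbf{u},t)$. Working in charts of $\mathcal{R}^n$, this makes $\varphi$ a well-defined map on $\mathcal{M}\times\mathbb{R}_+$, restricting if necessary to a common interval of existence (solutions of the bounded RNN systems of interest exist globally).

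Next I would establish that $\varphi$ is continuous, indeed $C^1$, jointly in both arguments. Continuity in $t$ is immediate since each solution is a $C^1$ curve, while continuity and differentiability in $\mathbf{u}$ follow from the classical theorem on smooth dependence of solutions on parameters, applied in a local chart $\boldsymbol{\kappa}\in\mathbb{R}^m$ of $\mathcal{M}$ so that $\mathbf{u}_{\boldsymbol{\kappa}}$ varies smoothly; the derivative $\partial\mathbf{x}/\partial\kappa_i$ produced here is exactly the adjoint variable of Theorem~\ref{thm:adjoint}. Since $\mathbb{R}_+\cong\mathbb{R}$, the domain $\mathcal{M}\times\mathbb{R}_+$ is an $(m+1)$-dimensional manifold carrying the product topology of $\mathcal{M}\times\mathbb{R}$. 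Because the graph $\Gamma=\{((\mathbf{u},t),\varphi(\mathbf{u},t))\}\subseteq(\mathcal{M}\times\mathbb{R}_+)\times\mathcal{R}^n$ of a continuous map is always homeomorphic to its domain, we have $\Gamma\cong\mathcal{M}\times\mathbb{R}$; taking $\mathcal{N}=P(\Gamma)$ with $P$ the projection onto the $\mathcal{R}^n$ factor then exhibits the solution manifold as a continuous image of $\mathcal{M}\times\mathbb{R}$, inside which every trajectory $t\mapsto\mathbf{x}(t)$ lies.

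The hard part is promoting this continuous surjection to a genuine homeomorphism, so that $\mathcal{N}$ actually carries, rather than merely receives, the topology of $\mathcal{M}\times\mathbb{R}$. The obstruction is failure of injectivity of $P|_\Gamma$, which coincides with injectivity of $\varphi$: distinct $(\mathbf{u},t)$ can share a state, as at the line of attractor fixed points in Fig.~\ref{fig:dynamic} where the time direction collapses and $G_{tt}\to 0$. I would remove the collapse in the time coordinate by the standard device of augmenting the state with $t$ itself, $\dot t=1$, so that the augmented trajectory $(\mathbf{x}(t),t)$ has a strictly monotone component and $\varphi$ is automatically injective along each fibre $\{\mathbf{u}\}\times\mathbb{R}_+$. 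Injectivity across distinct inputs together with local embedding is then precisely the full-rank condition $\operatorname{span}\{\mathrm{d}\varphi\,\partial_t,\mathrm{d}\varphi\,\partial_{\mathbf{u}_i}\}\cong\mathbb{R}^{m+1}$ of Corollary~\ref{cor:immersion}: under it $\mathrm{d}\varphi$ is injective, $\varphi$ is an immersion, and $\mathcal{N}$ is an immersed $(m+1)$-submanifold locally modelled on $\mathcal{M}\times\mathbb{R}$. I expect this rank/injectivity step to be the crux, since it is the only place where the conclusion can fail and where the abstract product topology must be reconciled with the geometry of the embedding in $\mathcal{R}^n$.
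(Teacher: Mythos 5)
Your proposal is correct and follows essentially the same route as the paper: your graph $\Gamma=\{((\mathbf{u},t),\varphi(\mathbf{u},t))\}$ is, written in chart coordinates, exactly the paper's augmented autonomous state $\mathbf{y}=[\mathbf{x},t,\boldsymbol{\kappa}]$ (with $\dot t=1$, $\dot{\boldsymbol{\kappa}}=\boldsymbol{0}$), and both arguments then run classical ODE theory --- Picard--Lindel\"of existence/uniqueness plus smooth dependence, i.e.\ Lemma \ref{lemma:smooth-dependence-on-init} --- followed by the projection $P$ onto $\mathcal{R}^n$. Your use of smooth dependence on \emph{parameters} in a chart is equivalent to the paper's conversion of parameters into initial conditions of the augmented system.

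The one point of divergence is your final paragraph. What you call ``the hard part'' --- upgrading $P|_\Gamma$ to a homeomorphism onto its image in $\mathcal{R}^n$ --- is not actually part of what Theorem \ref{thm:topology} claims as the paper construes it: there the topology statement is made for the augmented object $\mathcal{M}\times\mathbb{R}$, the state is only asserted to be its projection, and the behaviour of $P$ is deliberately deferred to the post-proof remark and to Corollary \ref{cor:immersion}, whose span hypothesis is \emph{assumed}, not derived. So your last paragraph is proving the corollary rather than the theorem; that is a defensible (arguably the more honest) reading of the ambiguously phrased statement, but note one slip in it: the full-rank condition makes $\mathrm{d}\varphi$ injective and hence $\varphi$ an immersion, yet an immersion can still fail to be globally injective (immersed submanifolds may self-intersect), so ``injectivity across distinct inputs'' does not follow from the rank condition and needs a separate argument --- which the paper supplies only in the working-memory setting (section \ref{sup:subsection:wm}), where global injectivity is deduced from correct task performance via the decoder, not from the metric.
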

where the superscripts indicate the dimensionality of the manifold. Theorem \ref{thm:dimensionality} of the main text is therefore just a special case of this theorem as the dimensionality of the dynamical system is directly defined by the manifold topology. To prove this theorem we need the following lemma:
\begin{lemma}\label{lemma:smooth-dependence-on-init}\cite{lee2003smooth}
    Let $\mathbf{\dot y}=\mathbf{g}(\mathbf{y})$, $\mathbf{y}(0)=\mathbf{y}_0$ with $\varphi(t, \mathbf{y}_0)=\mathbf{y}_0 + \int_0^t \mathbf{g}(\mathbf{y}(\tau))d\tau$ the solution of the dynamical system at time $t$ starting at $\mathbf{y}_0$, then if $\mathbf{g}$ is smooth, $\varphi$ is a diffeomorphism.
\end{lemma}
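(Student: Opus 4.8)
The plan is to treat $\varphi$ as the \emph{flow} of the autonomous vector field $\mathbf{g}$ and to establish the three standard properties of such a flow in sequence: joint smoothness, the group law, and invertibility. Write $\varphi_t(\mathbf{y}_0):=\varphi(t,\mathbf{y}_0)$ for the time-$t$ map. First I would note that since $\mathbf{g}$ is smooth it is locally Lipschitz, so the Picard--Lindel\"of theorem produces, for every $\mathbf{y}_0$, a unique maximal integral curve $t\mapsto\varphi(t,\mathbf{y}_0)$ solving the integral equation in the statement. This makes $\varphi$ a well-defined map on an open subset of $\mathbb{R}\times\mathbb{R}^n$, its flow domain.

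The substantive analytic step is \emph{smooth dependence on initial conditions}: that $\varphi(t,\mathbf{y}_0)$ is jointly smooth in $(t,\mathbf{y}_0)$. I would obtain this by first guessing, from a formal differentiation of the integral equation in $\mathbf{y}_0$, that the Jacobian $D_{\mathbf{y}_0}\varphi$ ought to satisfy the linearised \emph{variational equation} $\dot\Phi=J_{\mathbf{g}}(\varphi(t,\mathbf{y}_0))\,\Phi$ with $\Phi(0)=I$. Making this rigorous is the heart of the argument: one shows that the difference quotients of $\varphi$ in $\mathbf{y}_0$ converge, via a Gr\"onwall estimate applied to the integral equation, which establishes that $\varphi$ is $C^1$ and that its derivative solves the variational equation. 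Because the right-hand side of that equation involves $J_{\mathbf{g}}$, which is itself smooth, one may then treat the pair $(\varphi,\Phi)$ as the flow of a larger smooth system and re-apply the same $C^1$ result to bootstrap from $C^1$ to $C^k$ for every $k$, hence to $C^\infty$.

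Given joint smoothness, the remaining steps are formal and rest on uniqueness. By uniqueness of solutions the flow obeys the group law $\varphi_s\circ\varphi_t=\varphi_{s+t}$ on the overlap of the relevant domains, with $\varphi_0=\mathrm{id}$, since both sides solve the same autonomous equation with the same initial value. Specialising to $s=-t$ gives $\varphi_{-t}\circ\varphi_t=\mathrm{id}=\varphi_t\circ\varphi_{-t}$, so each $\varphi_t$ is a bijection with inverse $\varphi_{-t}$; as both $\varphi_t$ and $\varphi_{-t}$ are smooth by the previous step, $\varphi_t$ is a diffeomorphism onto its (open) image, which is the claim.

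The hard part will be the smooth-dependence step; everything else is bookkeeping built on existence and uniqueness. The delicate point is that one cannot differentiate under the integral sign before knowing the derivative exists, so the $C^1$ claim must come from a genuine limiting argument with a Gr\"onwall bound before the variational equation can be used to bootstrap higher regularity. Since this is precisely the result quoted from Lee's text, in the paper it suffices to cite that reference rather than reproduce the estimate.
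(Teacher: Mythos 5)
Your proposal is correct: it is the standard proof of the Fundamental Theorem on Flows --- Picard--Lindel\"of existence/uniqueness, smooth dependence on initial conditions via a Gr\"onwall estimate plus the variational equation with a bootstrap to $C^\infty$, and then the group law $\varphi_s\circ\varphi_t=\varphi_{s+t}$ to get invertibility with smooth inverse $\varphi_{-t}$. The paper does not prove this lemma at all --- it quotes it with a citation to Lee, and the proof in that reference is precisely the argument you sketch, so your attempt matches the intended source rather than diverging from it.
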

We can now prove the main theorem:
\begin{proof}[Proof of theorem \ref{thm:topology}]
    We prove this in a particular coordinate chart of $\mathcal{M}$ and $\mathcal{R}$. Then, the dynamical system can be reformulated as (by abuse of notation):
    \[
       \mathbf{\dot x}(t) = \mathbf{f}(\mathbf{x}(t), \mathbf{u}_{\boldsymbol{\kappa}}(t)), \quad \mathbf{x}(t)\in\mathbb{R}^n, \boldsymbol{\kappa}\in\mathbb{R}^m 
    \]
    We can augment this dynamical system by introducing the variable $\mathbf{y}\in\mathbb{R}^n \times \mathbb{R}^m \times \mathbb{R}$ defined as:
    \[
        \varphi(t,\boldsymbol{\kappa})= \mathbf{y}(t) = [\mathbf{x}(t), t, \kappa]
    \]
    then:
    \[
        \mathbf{\dot y} = [\mathbf{\dot x}, 1, \boldsymbol{0}] = [\mathbf{f}(\mathbf{y}_{1:n}, \mathbf{u}_{\mathbf{y}_{n+2:n+m+1}}(y_{n+1})), 1, \boldsymbol{0}], \quad \mathbf{y}(0)=[\mathbf{x}_0, 0, \boldsymbol{\kappa}]
    \]
    which is an autonomous dynamical system. By the theorem on solution of ODE, if the r.h.s. is at least once differentiable in its argument (which it is since $\mathbf{f}$ is), there exists a unique solution to this dynamical system. Furthermore, by lemma \ref{lemma:smooth-dependence-on-init}, $\varphi$ is a diffeomorphism and since $\boldsymbol{\kappa}$ is on an $m$-dimensional space and $t$ in a $1$-dimensional, $\mathbf{y}(t)$ is in an $m+1$-dimensional space. More generally $\mathbf{y}$ is an $m+1$-dimensional space of curves of topology $\mathbb{R}^m\times \mathbb{R}$. Finally, $\mathbf{x}$ and $\mathbf{y}$ are simply related by a projection $\mathbf{x}(t)=P(\mathbf{y}(t))=\mathbf{y}_{1:n}$.

    For the sake of completeness we also show that this holds not only over a single chart but over the whole manifold. Consider a smooth transition function $F_{i,j}:\mathbb{R}^m\rightarrow \mathbb{R}^m$ from coordinates of chart $i$ to coordinates of chart $j$ of $\mathcal{M}$, and $\varphi^i(t, \boldsymbol{\kappa}^i)$ the solution to the dynamical system on chart $i$. Then $\varphi^j(t, \boldsymbol{\kappa}^j)=\varphi^j(t, J_{F_{i,j}}\boldsymbol{\kappa}^i)$ where $J_{F_{i,j}}$ is the Jacobian of $F_{i,j}$. Since $\varphi$ is a composition of diffeomorphisms it is a diffeomorphism.
\end{proof}
Different conditions on the projection $P$ will give different properties to $\mathbf{x}$, for example if $P$ is homeomorphic then its preserves the topology of the manifold, if $\mathrm{d}P$ is surjective then $\mathbf{x}$ is on $\mathcal{M}\times \mathbb{R}$ an immersed submanifold of $\mathcal{R}^n$. 

We stop to address two natural questions regarding edge-cases of this result: 1) \textit{Asymptotic Regimes.} Our result assumes that $t \in \mathbb{R}$ (transient dynamics) rather than the asymptotic limit ($t \to \infty$). While the input-driven manifold usually matches the input dimensionality during transients, the geometry of the asymptotic invariant set can be fundamentally different. For instance, the mapping from inputs to steady-states may not be a diffeomorphism due to bifurcations, and in chaotic regimes, the system may settle onto strange attractors with fractal dimensionality~\cite{hentschel1983infinite}. 2) \textit{Chaos.} A chaotic system affects the geometry of the manifold, not its topology. To see this, consider pulse inputs of varying magnitude $\mathbf{u}(t)=\kappa \delta_{t_0}(t)$. At time $t=t_0+\Delta t$, the state lies on a 1-dimensional manifold parametrised by $\kappa$, after what it evolves autonomously. If the system has a positive Lyapunov exponent, small perturbations in $\kappa$ result in exponentially diverging trajectories $\mathbf{x}(t)$. Consequently, the tangent vector $\partial_\kappa \mathbf{x}$ will grow rapidly, and the associated metric tensor entries $G_{\kappa, t}$ and $G_{\kappa,\kappa}$ will diverge. However, despite this warping, the topological result holds.

So far --- and in the main text --- we have mainly considered fixed initial conditions of the dynamical system but the previous proof naturally gives the following corollary.
\begin{corollary}
    If $\mathbf{x}_0\in\mathcal{Q}^q$ and $\mathbf{u}\in\mathcal{M}^m$ then $\mathbf{y}(t)\in\mathcal{N}^{m+q+1}=\mathcal{M}^m\times \mathbb{R}\times \mathcal{Q}^q$ with $\mathbf{x}(t)=P(\mathbf{y}(t))$ so that varying the initial state only adds extra dimensionalities.
\end{corollary}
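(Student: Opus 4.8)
The plan is to follow the proof of Theorem~\ref{thm:topology} essentially verbatim, treating the initial-condition parameters exactly as the input parameters $\boldsymbol{\kappa}$ were treated there. The key observation is that in the augmented autonomous system of that proof, $\boldsymbol{\kappa}$ entered only as a constant-in-time block of the augmented state, and Lemma~\ref{lemma:smooth-dependence-on-init} then propagated it diffeomorphically. A varying initial condition $\mathbf{x}_0\in\mathcal{Q}^q$ can be absorbed in the same way: its local coordinates $\boldsymbol{\lambda}\in\mathbb{R}^q$ become additional constant coordinates of the augmented state, the only new feature being that they also enter the \emph{initial value} of the $\mathbf{x}$-block through the embedding $\mathbf{x}_0=\mathbf{x}_0(\boldsymbol{\lambda})$.

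Concretely, I would first fix coordinate charts on $\mathcal{M}$, $\mathcal{Q}$ and $\mathcal{R}$, parametrising the inputs by $\boldsymbol{\kappa}\in\mathbb{R}^m$ and the initial state by a smooth embedding $\boldsymbol{\lambda}\mapsto\mathbf{x}_0(\boldsymbol{\lambda})$ with $\boldsymbol{\lambda}\in\mathbb{R}^q$. I would then introduce the augmented variable
\[
    \mathbf{y}(t) = [\mathbf{x}(t),\, t,\, \boldsymbol{\kappa},\, \boldsymbol{\lambda}]\in\mathbb{R}^n\times\mathbb{R}\times\mathbb{R}^m\times\mathbb{R}^q,
\]
with autonomous dynamics
\[
    \mathbf{\dot y} = [\mathbf{f}(\mathbf{x},\mathbf{u}_{\boldsymbol{\kappa}}(t)),\, 1,\, \boldsymbol{0},\, \boldsymbol{0}], \quad \mathbf{y}(0)=[\mathbf{x}_0(\boldsymbol{\lambda}),\, 0,\, \boldsymbol{\kappa},\, \boldsymbol{\lambda}].
\]
Since $\mathbf{f}$ is at least once differentiable, the existence-uniqueness theorem for ODEs yields a unique solution $\varphi(t,\boldsymbol{\kappa},\boldsymbol{\lambda})$, and Lemma~\ref{lemma:smooth-dependence-on-init} shows that this flow is a diffeomorphism in $(t,\mathbf{y}(0))$. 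Because $\boldsymbol{\kappa}$ ranges over $m$ dimensions, $t$ over one, and $\boldsymbol{\lambda}$ over $q$, the image $\mathbf{y}(t)$ is $(m+q+1)$-dimensional with the product topology $\mathcal{M}^m\times\mathbb{R}\times\mathcal{Q}^q$, and the state is recovered as $\mathbf{x}(t)=P(\mathbf{y}(t))=\mathbf{y}_{1:n}$.

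The step I expect to require the most care is verifying that the map $(\boldsymbol{\kappa},\boldsymbol{\lambda})\mapsto\mathbf{y}(0)$, composed with the flow, is still a diffeomorphism onto its image. This reduces to checking that the augmented initial-data map is itself a smooth embedding: $\boldsymbol{\kappa}$ appears as its own coordinate, so injectivity in $\boldsymbol{\kappa}$ is automatic, while $\boldsymbol{\lambda}$ appears both as its own coordinate and implicitly through $\mathbf{x}_0(\boldsymbol{\lambda})$, so injectivity and injectivity of the differential follow at once from retaining $\boldsymbol{\lambda}$ as an explicit component. Composing this embedding with the flow diffeomorphism of Lemma~\ref{lemma:smooth-dependence-on-init} preserves both injectivity and the rank of the differential. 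Finally, as in Theorem~\ref{thm:topology}, global validity follows because the transition functions on both $\mathcal{M}$ and $\mathcal{Q}$ are smooth diffeomorphisms, so the solution map remains a composition of diffeomorphisms on chart overlaps; this establishes that $\mathcal{N}^{m+q+1}$ globally carries the topology $\mathcal{M}^m\times\mathbb{R}\times\mathcal{Q}^q$, confirming that varying the initial state merely appends the $q$ extra dimensions of $\mathcal{Q}$.
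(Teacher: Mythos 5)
Your proposal is correct and takes essentially the same approach as the paper, whose entire proof reads ``mutatis mutandis from Theorem~\ref{thm:topology}, additionally noticing that if $\mathbf{x}_0\in\mathcal{Q}^q$ then $\mathbf{y}_0\in\mathcal{Q}^q\times\{0\}\times\mathbb{R}^m$.'' The only (cosmetic) difference is bookkeeping: you append $\boldsymbol{\lambda}$ as an explicit extra constant block of the augmented state, which makes injectivity of the initial-data map automatic, whereas the paper keeps the augmented state $[\mathbf{x},t,\boldsymbol{\kappa}]$ unchanged and simply lets its initial condition range over the $(q+m)$-dimensional manifold $\mathcal{Q}^q\times\{0\}\times\mathbb{R}^m$ before applying the same flow-diffeomorphism argument of Lemma~\ref{lemma:smooth-dependence-on-init}.
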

\begin{proof}
    The proof follows mutatis mutandis from that of theorem \ref{thm:topology} additionally noticing that if $\mathbf{x}_0\in\mathcal{Q}^q$ then $\mathbf{y}_0\in \mathcal{Q}^q \times \{0\} \times \mathbb{R}^m$.
\end{proof}
The effect of noise can also be studied from theorem \ref{thm:topology}. In general, noise processes driving differential equations are sampled from an infinite-dimensional manifold of functions (e.g. $L^2$). Thus, the manifold of states taken by the system under \textit{all} realisations of the noise may in general be the entirety of the state space. However, the following corollary states that fixing the noise realisation generates a manifold:
\begin{corollary}
    Consider the stochastic differential equation $\mathbf{dx}=\mathbf{f}(\mathbf{x}, \mathbf{u}(t))dt + \mathbf{g}(\mathbf
    {x})d\mathcal{W}$, $\mathbf{x}_0=\mathbf{x}(0)$ where $\mathbf{g}(\mathbf{x})\in\mathbb{R}^{n\times q}$ is at least once differentiable in $\mathbf{x}$ and $d\mathcal{W}\in\mathbb{R}^{q}$ are the increments of a Wiener process. Then if $\mathbf{u}\in\mathcal{M}$ then $\mathbf{x}(t)=P(\mathbf{y}(t))$ where $P$ is a projection matrix and $\mathbf{y}(t)\in \mathcal{M}\times \mathbb{R}_+$.
\end{corollary}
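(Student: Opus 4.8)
The plan is to adapt the proof of Theorem~\ref{thm:topology} to the stochastic setting by treating the noise \emph{pathwise}: I fix a single realisation $\omega$ of the Wiener process and view the equation as a random dynamical system driven by the (non-smooth but now fixed) forcing $\mathcal{W}(\cdot,\omega)$. Conditional on $\omega$, I augment the state exactly as before, setting $\mathbf{y}=[\mathbf{x},t,\boldsymbol{\kappa}]$ with the input parametrised in a chart by $\boldsymbol{\kappa}\in\mathbb{R}^m$. The augmented dynamics then read $d\mathbf{y}=[\mathbf{f},1,\boldsymbol{0}]\,dt+[\mathbf{g}(\mathbf{x}),\boldsymbol{0},\boldsymbol{0}]\,d\mathcal{W}$ with initial condition $\mathbf{y}(0)=[\mathbf{x}_0,0,\boldsymbol{\kappa}]$. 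Crucially, the diffusion coefficient acts only on the first $n$ components and is $C^1$ in $\mathbf{y}$ because $\mathbf{g}$ is $C^1$ in $\mathbf{x}$, while the $t$ and $\boldsymbol{\kappa}$ blocks evolve deterministically, so the augmented system is again autonomous.

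The central step is to replace the deterministic smooth-dependence result, Lemma~\ref{lemma:smooth-dependence-on-init}, with its stochastic counterpart. Since Brownian paths are nowhere differentiable, Lemma~\ref{lemma:smooth-dependence-on-init} cannot be invoked directly; instead I appeal to the theory of \emph{stochastic flows}. Under the stated regularity on the coefficients, the solution of the augmented SDE generates, for almost every fixed $\omega$, a flow of diffeomorphisms in its initial data and parameters. Denoting the pathwise solution map by $\varphi^{\omega}(t,\boldsymbol{\kappa})=\mathbf{y}(t)$, this guarantees that $\varphi^{\omega}$ is smooth, invertible, and has smooth inverse on the $(m+1)$-dimensional domain $\mathcal{M}\times\mathbb{R}_+$, mirroring the role played by the diffeomorphism $\varphi$ in the deterministic argument.

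Given this, the conclusion follows as in Theorem~\ref{thm:topology}. Since $\boldsymbol{\kappa}$ ranges over the $m$-dimensional manifold $\mathcal{M}$ and $t$ over $\mathbb{R}_+$, and since $\varphi^{\omega}$ is a diffeomorphism onto its image, the augmented state satisfies $\mathbf{y}(t)\in\mathcal{M}\times\mathbb{R}_+$. Projecting onto the first $n$ coordinates with $P$ recovers the physical state, $\mathbf{x}(t)=P(\mathbf{y}(t))$, which is precisely the claim. As before, a chart-gluing argument via smooth transition functions extends the statement from a single chart of $\mathcal{M}$ to the whole manifold.

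I expect the main obstacle to be justifying the stochastic-flow diffeomorphism property under only $C^1$ coefficients. The nowhere-differentiability of $\mathcal{W}$ means the argument genuinely requires stochastic rather than classical ODE flow theory, and the sharpest versions of these results typically demand slightly more than $C^1$ regularity---for instance bounded first derivatives or a H\"older modulus $C^{1,\alpha}$---to ensure the flow is almost surely a diffeomorphism rather than merely a homeomorphism. A clean route is to fix the Brownian path, lift it to a rough path, and use the local Lipschitz continuity of the It\^o--Lyons solution map together with its smoothness in the initial condition and in $\boldsymbol{\kappa}$; this lets the pathwise solution map play exactly the role of $\varphi$ above while sidestepping the differentiability of $\mathcal{W}$.
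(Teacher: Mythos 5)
Your proof takes essentially the same approach as the paper: you augment the state to $\mathbf{y}=[\mathbf{x}(t),t,\boldsymbol{\kappa}]$, observe that $\mathbf{y}$ solves an SDE driven by the same fixed noise realisation, and replace the deterministic smooth-dependence lemma with the smoothness of stochastic flows in the initial data (the paper cites Kunita's lectures for precisely this step), before concluding by projection exactly as in the deterministic theorem. Your closing caveat --- that $C^1$ coefficients may be borderline for obtaining a pathwise flow of diffeomorphisms rather than merely homeomorphisms --- is a legitimate refinement of a point the paper passes over silently, but it does not change the structure of the argument.
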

\begin{proof}
    The proof follows a similar construction as the deterministic case. As in \ref{thm:topology} let $\mathbf{y}(t)=[\mathbf{x}(t), t, \boldsymbol{\kappa}]$, now $\mathbf{dy}=\mathbf{\bar f}(\mathbf{x}, \mathbf{u}_{\boldsymbol{\kappa}}(t))dt + \mathbf{\bar g}(\mathbf
    {x}, \mathbf{u}_{\boldsymbol{\kappa}}(t))d\mathcal{W}$ where $\mathbf{\bar f}=[\mathbf{f}, 1, \boldsymbol{0}]$, $\mathbf{\bar g}=[\mathbf{g}^T, 0]^T$. Thus $\mathbf{y}$ follows an SDE driven by the same noise realisation. Then, using the smoothness of the solution of an SDE with respect to its initial state\cite{kunita1986lectures}, the rest of the proof follow the same steps as that of \ref{thm:topology}. 
\end{proof}
This shows that under a fixed noise realisation the state of the system lies on a manifold. In general, for different noise realisations $P$ may yield a submanifold an immersion or an embedding of $\mathcal{M}\times \mathbb{R}_+$. Future work could study the distribution of such objects under the noise, for example characterising the measure of the solutions for which it is a proper embedding or an immersion.
%


\subsection{Geometry as input-manifold transformations}

Having characterised the geometry and differentiable structure of the manifold of dynamical systems, we next turn to their Riemannian geometry. The main insight will be that the metric on the manifold can be defined in terms of the solution to an adjoint dynamical system. This will provide both theoretical insights and a practical way to obtain the matrix representation of a metric.


\begin{proof}[Proof of theorem \ref{thm:adjoint}]
The Jacobian of $\mathbf{y}$ is given by (ignoring time dependencies):
\[
    J_{\mathbf{y}} = \begin{bmatrix}
        \frac{\partial\mathbf{x}}{\partial t} & \frac{\partial\mathbf{x}}{\partial\boldsymbol{\kappa}} \\
        \frac{\partial t}{\partial t} & \frac{\partial t}{\partial\boldsymbol{\kappa}}\\
        \frac{\partial\boldsymbol{\kappa}}{\partial t} & \frac{\partial\boldsymbol{\kappa}}{\partial\boldsymbol{\kappa}}
    \end{bmatrix} = \begin{bmatrix}
        \mathbf{f} & \color{eqblue}{\frac{\partial\mathbf{x}}{\partial\boldsymbol{\kappa}}} \\ 1 & \boldsymbol{0} \\
        \boldsymbol{0} & I
    \end{bmatrix}
\]
where the submatrices are concatenated along their appropriate dimensions. The term $\mathbf{f}$ is simply the r.h.s. of the dynamical system, so it remains to compute the term in blue. Each column of this term is of the form $\frac{\partial\mathbf{x}}{\partial\kappa_i}$, which describes how the state of the system at a particular time point changes for an infinitesimal change in one of the input variables on a particular chart. These are given by:
\begin{align*}
    \frac{\partial\mathbf{x}}{\partial\kappa_i} &= \frac{\partial}{\partial\kappa_i}\left(\mathbf{x}_0 + \int_0^t \mathbf{f}(\mathbf{x}(t), \tau, \mathbf{u}_{\boldsymbol{\kappa}}(\tau)) d\tau\right) \\
    &= \int_0^t \frac{\partial}{\partial\kappa_i}\mathbf{f}(\mathbf{x}(t), \tau, \mathbf{u}_{\boldsymbol{\kappa}}(\tau)) d\tau
\end{align*}
Now:
\begin{align*}
    \frac{\partial\mathbf{x}}{\partial\kappa_i} = \int_0^t \frac{\partial\mathbf{f}}{\partial\mathbf{u}}\frac{\partial \mathbf{u}}{\partial \kappa_i}+ \frac{d\mathbf{f}}{d\mathbf{x}}\frac{\partial \mathbf{x}}{\partial \kappa_i} d\tau
\end{align*}
where we have omitted the arguments of each function for notational clarity. Taking the time derivative on both sides and noticing that $\left.\frac{\partial\mathbf{x}}{\partial\kappa_i}\right|_{t=0}=\boldsymbol{0}$ this defines a new dynamical system:
\[
    \frac{d}{dt}\frac{\partial\mathbf{x}}{\partial\kappa_i} = \frac{d\mathbf{f}}{d\mathbf{x}}\frac{\partial \mathbf{x}}{\partial \kappa_i} + \frac{\partial\mathbf{f}}{\partial\mathbf{u}}\frac{\partial \mathbf{u}}{\partial \kappa_i} \quad\quad \left.\frac{\partial\mathbf{x}}{\partial\kappa_i}\right|_{t=0}=\boldsymbol{0}
\]
note that this is a form of adjoint dynamics. Indeed if we define $\mathbf{a}_i(t)=\frac{\partial\mathbf{x}}{\partial\kappa_i}(t)$ and $\mathbf{h}(\mathbf{x}, t, \kappa)=\frac{\partial\mathbf{f}}{\partial\mathbf{u}}\frac{\partial \mathbf{u}}{\partial \kappa_i}$, then we obtain the final equation:
\[
    \mathbf{\dot a}_i = J_{\mathbf{f}}\mathbf{a}_i + \mathbf{h}(\mathbf{x}, t, \kappa)
\]
This is a controlled dynamical system, where the control term $\mathbf{h}$ depends on the solution of the original dynamical system $\mathbf{x}(t)$ and the input $\mathbf{u}(t)$.
  
\end{proof}



\subsection{Some practical considerations}

In this section we briefly discuss how this framework can be concretely implemented in an auto-differentiation framework such as Jax or Pytorch. The numerical solution of a dynamical system obtained with, say, an Euler solver, is differentiable with respect to the input to the dynamical system at any time point\cite{chen2018neural}. In our work, we compute these derivatives in different ways that differ in their computational cost and accuracy:
\begin{table}[h]
\centering
\begin{tabular}{@{}lccc@{}}
\toprule
 & \textbf{Adjoint dynamics} & \textbf{Auto-differentiation} & \textbf{Finite difference} \\
\midrule
Time complexity & $\mathcal{O}(n^2 t)$ & $\mathcal{O}(n t)$ & $\mathcal{O}(n t)$ \\
Memory complexity        & $\mathcal{O}(n)$     & $\mathcal{O}(n t)$ & $\mathcal{O}(n t)$ \\
Easy implementation &            & \checkmark         & \checkmark         \\
Accurate           & \checkmark           & \checkmark         &           \\
\bottomrule
\end{tabular}
\end{table}

In figure \ref{fig:dynamic} we studied the adjoint analytically, in figures \ref{fig:mante} and \ref{fig:wm-main}\textbf{d}--\textbf{h} we used auto-differentiation and in figures \ref{fig:wm-main}\textbf{c} and \ref{sup_fig:wm}\textbf{a,b} finite difference.


\section{Models}\label{sup:section:models}

In this section we provide the detailed description of the models used throughout the main manuscript. Jupyter notebooks implementing all analyses performed here are also provided.

\subsection{Two-layer deep neural network}\label{sup:subsection:static}

In section \ref{section:metric-static} we considered a one-hidden-layer deep neural network given by:
\[
    y=\text{tanh}(D\mathbf{z})\in\mathbb{R}^1, \quad \mathbf{z}=\text{tanh}(W\mathbf{x}+\mathbf{b}), \quad \mathbf{x}\in\mathbb{R}^2, W\in\mathbb{R}^{3\times 2}, D\in\mathbb{R}^{1 \times 3}
\]
The inputs $\mathbf{x}$ are constrained to a circle plus some noise during training:
\[
    \mathbf{x}(\theta) = [\cos(\theta), \sin(\theta)] + \xi, \quad \theta\in[0, 2\pi), \xi\sim \mathcal{N}(\boldsymbol{0}, I)
\]
where $\mathcal{N}$ is the Gaussian distribution. The loss was given by:
\[
    \int_0^{\pi} ||y(\theta) - 1||^2 d\theta + \int_{\pi}^{2\pi} ||y(\theta) - (-1)||^2 d\theta
\]
We evaluated these integrals by discretising $[0, 2\pi)$ into $200$ even bins. The parameters were initialised as $w_{i,j},b_i\sim \mathcal{U}(-1/2, 1/2)$, $d_{i,j}\sim \mathcal{U}(1/3, 1/3)$ where $\mathcal{U}$ is the uniform distribution and optimised using Adam\cite{kingma2014adam} with learning rate $0.01$ over $1000$ iterations.

\begin{figure}[H]
    \centering

    \begin{minipage}{0.63\textwidth}
    \includegraphics[width=\linewidth]{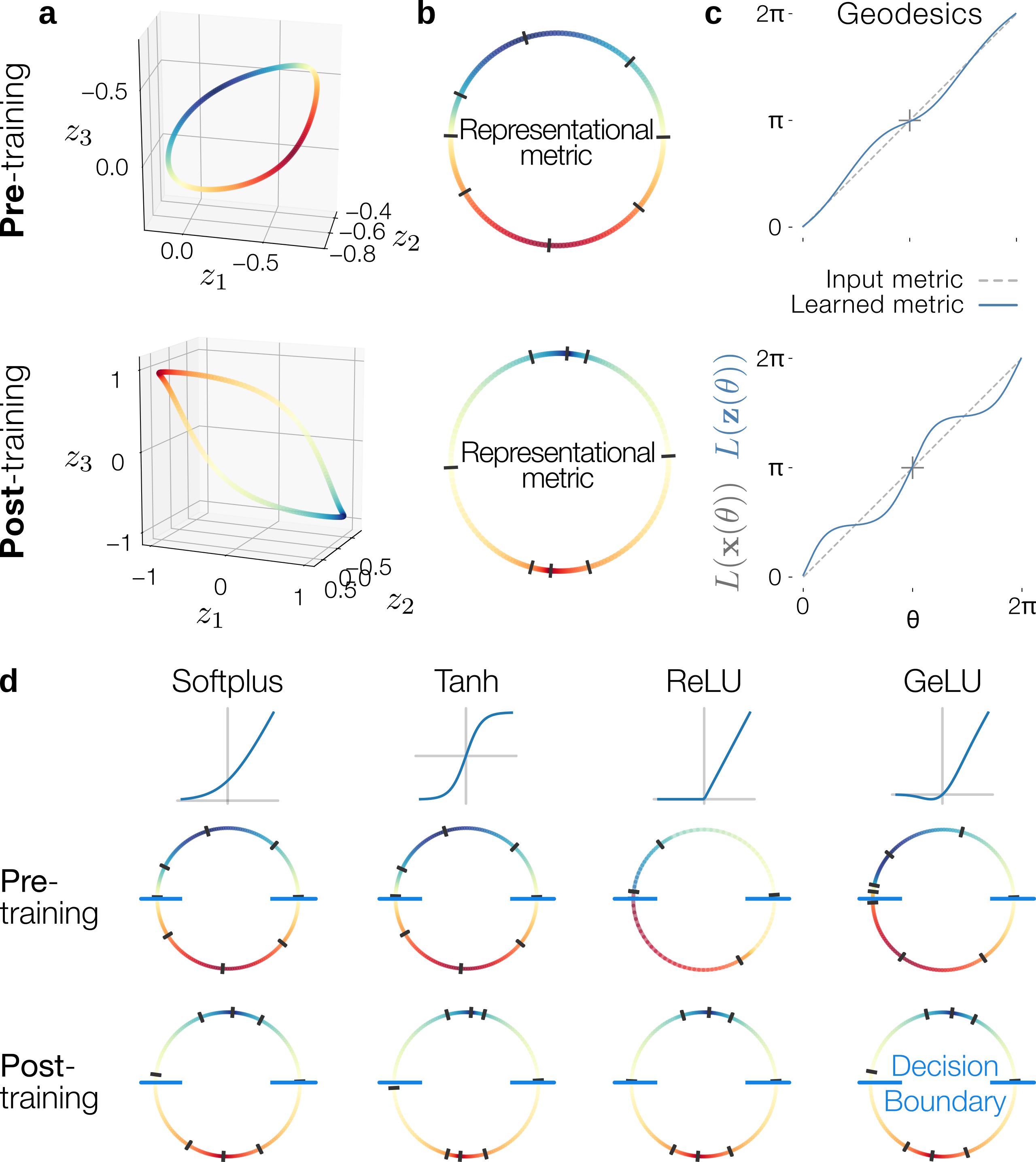}
    \end{minipage}
    \hfill
    \begin{minipage}{0.35\textwidth}
        \caption{\textbf{The representational metric changes over training in deep neural networks.} Top: randomly initialised network. Bottom: network after training. \textbf{a.} Activation of the hidden layer in response to all inputs.  \textbf{b.} Learned representational metric visualised as gridlines on the input manifold. \textbf{c.} Arclength under either the uniform input metric (dashed line) or learned metric (blue line) integrating from $\theta=0$ to $\theta=2\pi$. \textbf{d.} Same analysis across different activation functions. Although the intrinsic geometry varies pre-training, the post-training geometry is consistent, hinting at a shared computational mechanism.}
    \end{minipage}
    
    \label{sup_fig:static}
\end{figure}


\subsection{E-I network}\label{sup:subsection:E-I}

In section \ref{section:metric-dynamic} we considered a three-neuron continuous-time RNN following Dales' law, also called an Excitatory-Inhibitory (E-I) network. The dynamics were given by:
\[
    \mathbf{\dot x} = W\phi(\mathbf{x}) - \mathbf{x} + B\mathbf{u}
\]
The inputs were constant in time and lied on a $1$-d line manifold and the weights were defined according to \cite{wong2006recurrent}:
\[
    W=\begin{bmatrix}
        a & b & -c\\
        b & a & -c\\
        d & d & -e
      \end{bmatrix}, \quad \mathbf{u} = [u, 1-u]
\]
with $a,b,c,d\in\mathbb{R}_+$ and $u\in[0, 1]$. We discretized $u\in[0, 1]$ into $200$ points and evaluated the corresponding dynamical for each point. To evaluate the $200$ dynamical systems we used the Heun second order numerical method\cite{kidger2022neural} until they reached steady-state, saved at $100$ time points. This gave a $200 \times 100 \times 3$ mesh that is visualised in panel \textbf{c} of figure \ref{fig:dynamic}.


\subsection{Contextual decision-making under noisy inputs}\label{sup:subsection:mante}

In section \ref{section:mante} we used a classic model of contextual evidence integration\cite{mante2013context}. The model is described in the main text. The weights were initialised as $w_{ij}\sim\mathcal{N}(0, g/\sqrt{n})$ the other parameters were initialised as $b_{i,j},c_{i,j}=\mathcal{N}(0, 1/\sqrt{2})$ and $d_{i,j}\sim\mathcal{N}(0, 1/\sqrt{n})$. We used a differentiable SDE solver with the Heun method and an adaptive step solver\cite{kidger2022neural}. To optimise the system we used Adam in Optax\cite{deepmind2020jax}. The hyper-parameters were:
\begin{table}[h!]
    \centering
    \begin{tabular}{llllllllll}
        \toprule
        $n$ & $g$ & $\sigma_\text{input}$ & $\sigma_\text{neuron}$ & lr & Iterations & Optimiser & SDE solver\cite{kidger2022neural} & PID\cite{kidger2022neural} rtol, atol \\
        \midrule
        100 & 0.5 & 0.5 & 0.05 & 5$\times  10^{-5}$ & 3000 & Adam\cite{kingma2014adam} & Heun & $10^{-2}$, $10^{-3}$ \\
        \bottomrule
    \end{tabular}
    \label{tab:mante-hyper-parameters}
\end{table}

We next briefly describe the novel analyses performed here. 

\textbf{The basis of the tangent space.} The basis of the tangent space was constructed using auto-differentiation through the ODE solver. This can be simply achieved in Jax by defining fully differentiable function $\varphi:\mathbb{R}_+\times\mathcal{M}\rightarrow \mathbb{R}^n$ taking points on the manifold of evidences $\mathcal{M}$ and a particular time point in $\mathbb{R}_+$ to return the state of the system at that time point. This function can then be differentiated w.r.t. each of its arguments to obtain the basis of the tangent space at the state of the system at that time point. This function is a composition of i) an embedding of the manifold $\mathcal{M}$ in $\mathbb{R}^2$, which here simply consists of $\mathbf{u}(t)=[u_1, u_2]$ for $u_1,u_2\in [-0.2, 0.2]$ ii) the evaluation of the ODE under these inputs.

\textbf{Variability over noise.} In figure \ref{fig:mante}\textbf{a} (right) and \textbf{d} we show the variability of the output of the network to variations to its input. For this we fix a particular Wiener process realisation $\mathcal{W}$ and evaluate the dynamical system by either fixing $u_2=0$ and varying $u_1\in[-0.2, 0.2]$ (for panel \textbf{a}) or fixing $u_1=0.2$ and varying $u_2\in [-0.2, 0.2]$ (for panel \textbf{d}). The rest of our analyses were performed under a fixed realisation of the noise: the one where $d\mathcal{W}(t)=0$ for all $t$. Although we note that future work could explore the distribution of the analyses presented here under the measure of this noise process.

\textbf{Manifold plotting.} In figure \ref{fig:mante}\textbf{b} and \textbf{e} we show slices of the manifold. In \textbf{b} they are evidence slices, where $u_1$ and $u_2$ are fixed and we evaluate $\varphi$ varying $t$, in order to obtain a trajectory of the dynamical system. In \textbf{e} we take $5$ time slices of the manifold by fixing $t$ and evaluating $\varphi$ for all $u_1, u_2\in[-0.2, 0.2]$. The attractor planes were simply computed by letting $t$ be large enough that the r.h.s. of the dynamical system became close to zero. We note that this is not a ``plane attractor'' in the sense that the state of the system would tend to those values in the absence of inputs. They are a manifold of attractor fixed points parameterised by $u_1,u_2$.

\textbf{The metric.} Under a particular choice of coordinates --- here the $[t, u_1, u_2]$ coordinates --- the metric at each point on the manifold is a $3\times 3$ positive (semi-)definite symmetric matrix. In figure \ref{fig:mante}\textbf{e} we plot the metric at two time points on the manifold $t=0.2$ and $t=10$ and for $u_1,u_2=0$. The results are nevertheless consistent over different values of $u$ (Fig. \ref{sup_fig:mante}).

\textbf{Warping.} In figure \ref{fig:mante}\textbf{f} and \textbf{g} we analyse the warping of the manifold across time and conditions. In panel \textbf{f} we compute the squared geodesic distance along either the $u_1$ (top) or $u_2$ (bottom) grid lines. For example for the top panels $\int_0^\mu ||\partial_{u_1} \varphi(u_1, u_2)||^2 du_1$ evaluated at $\mu\in\{-0.2, ..., -0.04, 0, 0.04, ..., 0.2\}$. We divided this quantity by the total length of the geodesic to visualise only the effect of warping across values of the input as opposed to the overall stretch or compression of the manifold. Figure \ref{sup_fig:mante} shows the same analysis without this division, and integrated from $-0.2$ to $\mu$ (instead of $0$ to $\mu$), which further highlights the strength and context dependence of the warping. This is the same analysis performed on the deep neural network of figure \ref{fig:static-network}, except repeated for the different values of the $u$ that is not integrated over. In panel \textbf{g} we compute the eigenvalues of the metric. To be more precise, since the metric is a $(0,2)$-tensor, the usual eigenvalue equation is meaningless and we instead compute the generalised eigenvalues of the metric via the equation $G\mathbf{v}_i=\lambda_i I \mathbf{v}_i$ where is an identity metric. We average them over the irrelevant input, and then normalised by the top eigenvalue at $u_1=0$ in context $1$ and $u_2=0$ in context $2$. The error bars are the eigenvalues over the irrelevant input. The fact that one eigenvalue is much larger at late time points suggests that space is strongly warped into a seemingly 1-dimensional manifold. We note that this effect is less strong away from the decision boundary (Fig. \ref{sup_fig:mante}), so the warping is not uniform over all relevant input values (as also shown in panel \textbf{f}).

\textbf{Geometry causally affects computation.} Here, we argue that the warping observed in figure \ref{fig:mante} is not simply correlative. To assess whether wrapping was necessary, we trained a network under the constraint that the diagonal elements of the metric remain of equal magnitude --- meaning that the representation cannot compress an irrelevant input. Conversely, to assess whether warping is sufficient, we trained a network simply to warp the irrelevant input dimension, before testing the model on the full task. Specifically, this was done by constraining the ratio of the diagonal elements of the metric to be either $1$ or their value in the trained RNN. To do that, we defined a new loss $\hat l$:
\[
    \hat l_{ctx}(W, A, D) = \alpha l_{ctx}(W, A, D) + \beta\left( c - \frac{G_{u_{ctx}u_{ctx}}}{G_{u_{ctx-1}u_{ctx-1}}}\right)^2
\]
where $G_{u_{ctx}u_{ctx}}=\lVert \partial_{u_{ctx}} x\rVert^2$ and $u_{ctx}$ is the input of the current context while $u_{1-ctx}$ is the input of the opposite context. When preventing warping, we train all model parameters simultaneously. When we train only on warping, we first train $W$ to warp the representation $(\alpha=0,\,\beta=10^{-2},\, c=50)$ before holding it fixed while training the rest of the model parameters on the original task $(\alpha=1,\,\beta=0)$.

\begin{table}[h!]
\centering
\begin{tabular}{l c c}
\hline
\textbf{Model} &  & \textbf{Test performance (MSE, mean $\pm$ std)} \\
\hline
Untrained model &  & $0.968 \pm 0.708$ \\
Baseline trained model & $\alpha=1,\,\beta=0$ & $0.001 \pm 0.002$ \\
No warping & $\alpha=1,\,\beta=10^{-2},\, c=1$ & $0.239 \pm 0.257$ \\
Untrained $W$ &  & $0.234 \pm 0.259$ \\
Only warping-trained $W$ & $\alpha=0,\,\beta=10^{-2},\, c=50$ & $0.001 \pm 0.001$ \\
\hline
\end{tabular}
\label{tab:mante_manip}
\end{table}

Interestingly, warping seems necessary to perform the task, in that the model constrained not to warp does not converge to the optimal task performance. Furthermore, more surprisingly, simply training the recurrent weights of the model to warp the representation, without any task information, is sufficient to reach a performance nearly as good as the baseline model trained on the actual task. Thus, warping is a necessary and sufficient element of the computation of the solution to this task. This further suggests that the warping observed is not purely correlational but indeed the way by which the model performs the task.

\begin{figure}[h]
    \centering
    \includegraphics[width=\linewidth]{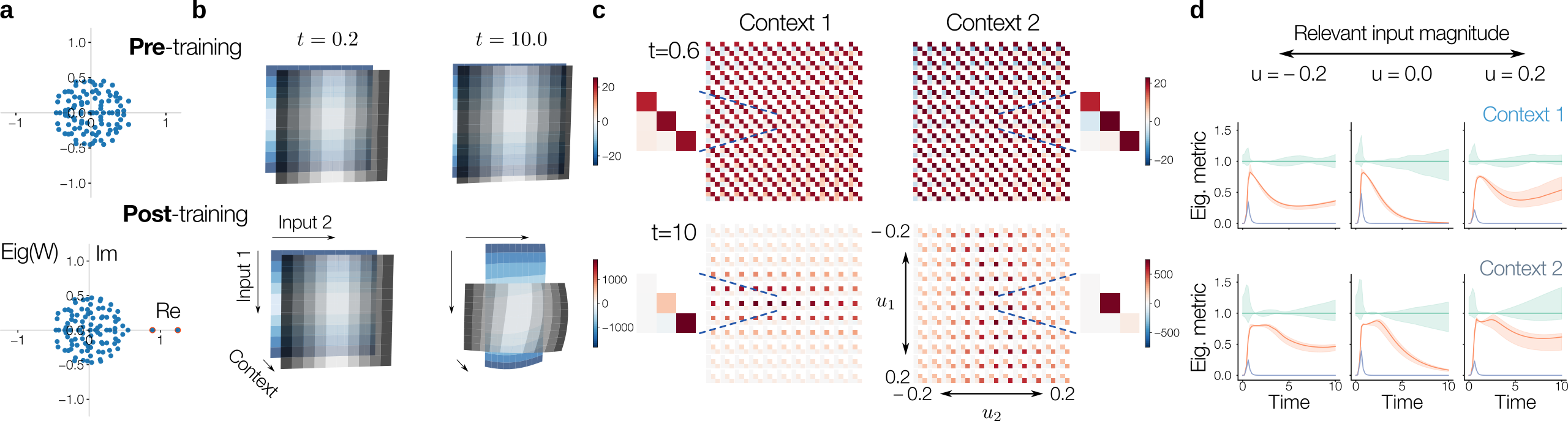}
    \caption{\textbf{The context-dependent warping emerges over training and is consistent across inputs.}
    \textbf{a.} Eigenspectrum of the weights illustrating that the network has few large (real) eigenvalues (circled in red).
    \textbf{b.} Geodesic gridlines from $u_i=-0.2$ in context $i$. The gridlines from the two contexts are overlapping to illustrate that the stretch/compression of the irrelevant input over time is context-dependent.
    \textbf{c.} Metric over all inputs at early and late time points.
    \textbf{d.} Eigenvalues of the metric over different relevant inputs. The error bars are over the irrelevant input.
    }
    \label{sup_fig:mante}
\end{figure}

\subsection{Sequential working memory}\label{sup:subsection:wm}

In section \ref{section:wm} we used a common model of working memory\cite{stroud2023optimal,cueva2021recurrent} that is described in the main text. We used the same parameters initialisation as in the contextual evidence integration RNN. The hyper-parameters were:
\begin{table}[h!]
    \centering
    \begin{tabular}{lllllllll}
        \toprule
        $n$ & $g$ & lr & Iterations & Optimiser & ODE solver\cite{kidger2022neural} & PID\cite{kidger2022neural} rtol, atol \\
        \midrule
        100 & 0.5 & 5$\times  10^{-3}$ & 3000 & Adam\cite{kingma2014adam} & Heun & $10^{-2}$, $10^{-3}$ \\
        \bottomrule
    \end{tabular}
    \label{tab:mante-hyper-parameters}
\end{table}

\textbf{Gaussian curvature.} In figure \ref{fig:wm-main}\textbf{c} we showed the Gaussian curvature of the torus at a particular time point. In principle the curvature could be computed in a similar way to the metric using adjoint dynamics or automatic differentiation. However such methods would be computationally inefficient, especially to compute the curvature over a fine mesh on the torus. To efficiently compute the Gaussian curvature we used a finite difference method. We evaluated the torus using the ODE solver and fixed step sizes $h=0.01$ and a mesh of angles $\theta_i\in \{0, \Delta \theta, ... 2\pi-\Delta \theta\}$ where $ \Delta \theta=0.01\pi$ (i.e. a mesh size of $200$) and computed first order accuracy finite difference estimates of the terms of Gauss's Theorem Egregium formula:
\begin{align*}
    \partial_{\theta_1}\mathbf{x}&= (\mathbf{x}(t, \theta_1+\Delta \theta, \theta_2))/\Delta \theta \\
    \partial_{\theta_2}\mathbf{x}&= (\mathbf{x}(t, \theta_1, \theta_2+\Delta \theta))/\Delta \theta \\
    V &= [\partial_{\theta_1}\mathbf{x}, \partial_{\theta_2}\mathbf{x}] & \text{Basis of tangent space}\\
    G &= V^TV &\text{Metric}\\
    \partial_{\theta_1} G &= (G(t, \theta_1+\Delta \theta, \theta_2) - G(t, \theta_1, \theta_2))\Delta \theta\\
    \partial_{\theta_2} G &= (G(t, \theta_1, \theta_2+\Delta \theta) - G(t, \theta_1, \theta_2))\Delta \theta\\
    k_1 &= (\partial_{\theta_2} G_{\theta_1, \theta_2} - \partial_{\theta_1} G_{\theta_2, \theta_2})/ 2\sqrt{\det G}\\
    k_2 &= (\partial_{\theta_1} G_{\theta_1, \theta_2} - \partial_{\theta_2} G_{\theta_1, \theta_1})/ 2\sqrt{\det G} \\
    \partial_{\theta_1}k_1 &= (k(\theta_1 + \Delta \theta, \theta_2) - k(\theta_1, \theta_2))/\Delta \theta \\
    \partial_{\theta_2}k_2 &= (k(\theta_1 , \theta_2+ \Delta \theta) - k(\theta_1, \theta_2))/\Delta \theta \\
    K &= (\partial_{\theta_1}k_1 + \partial_{\theta_2}k_2)/\sqrt{\det G} &\text{Gaussian curvature\cite{pressley2010elementary}}
\end{align*}
To illustrate the curvature, we found the points $\theta_1^*,\theta_2^*$ where the curvature was highest (resp. lowest) and took a piece of the torus $\theta_i \in [\theta_i^*-0.19\pi:\theta_i^*+0.19\pi]$, defined the matrix $X = [\mathbf{x}(t, \theta_1, \theta_2)]$ for these values of $\theta_1$ and $\theta_2$, and projected $X - \langle X\rangle_{\theta_1, \theta_2}$ on the PC space.

\textbf{Plotting the tori.} In figure \ref{fig:wm-main}\textbf{b} we plot tori at different time points. They are all color coded according to $\theta_1$. We computed the principal components after mean-centring. That is on $X=[\mathbf{x}(t, \theta_1=0, \theta_2=0), \mathbf{x}(t, \Delta \theta, 0), \mathbf{x}(t, 0, \Delta \theta), ..., \mathbf{x}(t, 2\pi, 2\pi)]$ where $\Delta \theta = 0.01\pi$, and then projected $X - \langle X\rangle_{\theta_1, \theta_2}$.

\textbf{The metric.} In figure \ref{fig:wm-main}\textbf{d} we show the metric over the whole torus where the colormap is normalised to the $10$th and $90$th quantile per entry of the metric. In figure \ref{fig:wm-main}\textbf{e} \textit{(top)} we averaged the metric over the torus, while in figure \ref{fig:wm-main}\textbf{e} \textit{(bottom)} we first computed $G_{\theta_2\theta_2}/(G_{\theta_1\theta_1}+G_{\theta_2\theta_2})$ before averaging. The error bars showed the $10$th and $90$th quantile of that value across the points on the torus. 

\textbf{Subspace similarity.} In figure \ref{fig:wm-main}\textbf{g} we computed the similarity between the $2$D PC spaces computed at any pairs of time points. The similarity is defined via the nuclear norm $||V(t_1) V(t_2)^T||_*/2$ where $V(t_i)$ is the $2 \times n$ matrix of the two right singular vectors with largest singular values. This defines a notion of average angle between the subspaces\cite{hitzer2013angles}.

\textbf{Higher-dimensional torus.} In figure \ref{fig:wm-main}\textbf{h,i} we trained a new RNN on the same task but now with three targets at three angles $\theta_1, \theta_2, \theta_3$. In figure \ref{fig:wm-main}\textbf{i} we show the squared geodesic distance along the gridlines defined by $\theta_i$ for all $i$. They are computed by choosing reference angles (here $(\theta_1, \theta_2, \theta_3)=(\pi, \pi, \pi)$) and computing $L_{\text{right}}^{\theta_i}=\int_\pi^{2\pi} G_{\theta_i\theta_i}(\theta_1, \theta_2, \theta_3)d\theta_i$ (and similarly $L_{\text{left}}^{\theta_i}\int_{0}^{\pi} G_{\theta_i\theta_i}(\theta_1, \theta_2, \theta_3)d\theta_i$). The ${\theta_i}$ side of the prism shown is of length $\langle L^{\theta_i}_\text{left}+L^{\theta_i}_\text{right} \rangle_{\theta_j, \theta_k}$. The grid shown on the prism is obtained by varying the bound of (say the first) integral and normalising by the total integral $L_{\text{right}}^{\theta_i}$.

\begin{figure}[h]
    \centering
    \includegraphics[width=0.9\linewidth]{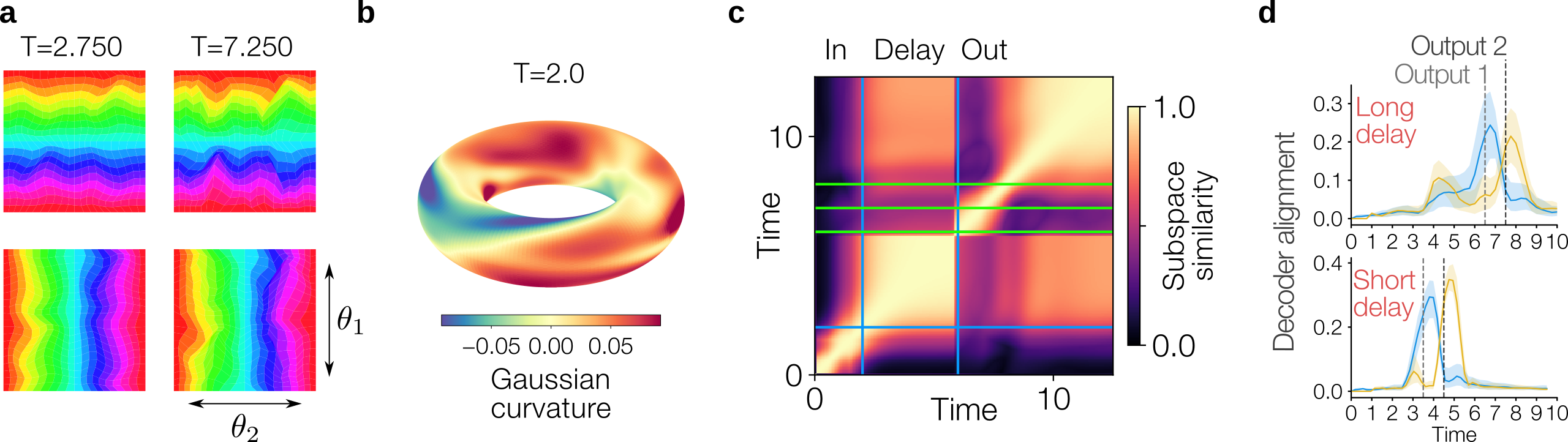}
    \caption{
    \textbf{Working memory dynamically changes representational geometry.}
    \textbf{a.} Geodesic gridlines at two different time points (mid-delay and end of output) from $\theta_i=\pi$ to $0$ and $2\pi$. The top row are the $\theta_1$ geodesics. Unlike the decision task, there is no stronger warping around a particular point on the manifold, because there is no decision boundary.
    \textbf{b.} Gaussian curvature at the beginning of the delay period. The curvature is more uniform than late in the trial (Fig. \ref{fig:wm-main}), but non-zero. 
    \textbf{c.} Subspace (2D PC space) similarity during the whole task.
    \textbf{d.} Decoder alignment to the basis of the tangent space under two different delays. It suggests that it is the go signal that cues this alignment.
    }
    \label{sup_fig:wm}
\end{figure}

\textbf{The topology of working memory.} In theorem \ref{thm:topology} we show that the state of the RNN at a particular time point must lie on the same manifold as the inputs. However, this manifold can be immersed in the neural state space (e.g. intersecting itself), or quotiented (e.g. if there is periodicity). Here we show that it must be exactly a torus at any time point after the second input presentation, and that is embedded in the state space. For this it suffices to prove that $\mathbf{x}(t, \theta_1, \theta_2)$ is a homeomorphism in its arguments. 
\begin{proposition}
    Let $\mathbf{\dot x}$ a working memory RNN defined as in section \ref{section:wm}, and suppose the RNN performs the task correctly, meaning that $D\phi(\mathbf{x}(t_1, \theta_1, \theta_2))=[\cos(\theta_1), \sin(\theta_1)]$ where $t_1$ is the first output time (and similarly for the second output). Then throughout the delay and output period the state of the RNN lies on a (hyper-)torus embedded in the RNN states-space $\mathbb{R}^n$ at any time point.
\end{proposition}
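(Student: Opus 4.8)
The plan is to reduce the statement, exactly as the remark after it suggests, to showing that for each fixed time $t$ in the delay and output window the map $\varphi_t:(\theta_1,\theta_2)\mapsto \mathbf{x}(t,\theta_1,\theta_2)$ is a smooth embedding of $T^2=S^1\times S^1$ into $\mathbb{R}^n$. The angles $\theta_1,\theta_2$ each range over a circle, so the input manifold is $\mathcal{M}=T^2$ with $m=2$, and Theorem \ref{thm:topology} already tells us that the state at a fixed time lies on an (at most) immersed image of $T^2$. The entire content of the proposition is therefore to upgrade ``immersed image'' to ``embedded torus,'' i.e.\ to rule out self-intersections and tangent degeneracies, and this is precisely where the hypothesis of correct task performance enters. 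I would establish three things in turn: (i) $\varphi_t$ is injective, (ii) $\mathrm{d}\varphi_t$ is injective (so $\varphi_t$ is an immersion), and (iii) conclude via compactness of $T^2$.

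The single structural tool underlying everything is that during the delay and output period the stimulus $B\mathbf{u}(t)$ no longer depends on $(\theta_1,\theta_2)$ (it is off), and the only explicit time dependence, the go pulse $\text{go}_h(t)\mathbf{c}$, is shared across all input conditions. Hence every trajectory in this window solves the \emph{same} smooth nonautonomous ODE $\dot{\mathbf{x}}=W\phi(\mathbf{x})-\mathbf{x}+\text{go}_h(t)\mathbf{c}$, and by Lemma \ref{lemma:smooth-dependence-on-init} the associated flow $\Phi_{s\leftarrow t}$ is a diffeomorphism of $\mathbb{R}^n$ for any $s,t$ in the window, invertible in both time directions. Consequently, coincidence of two states at one time propagates to all times in the window, and linear independence of tangent vectors at one time is equivalent to linear independence at any other time (since $\mathrm{d}\Phi$ is invertible). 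This lets me transfer information between the two readout times $t_1<t_2$ regardless of where $t$ itself sits.

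For injectivity, suppose $\mathbf{x}(t,\theta_1,\theta_2)=\mathbf{x}(t,\theta_1',\theta_2')$. Flowing forward or backward via $\Phi$ to the output times, the states agree at both $t_1$ and $t_2$. The correct-performance hypothesis gives $[\cos\theta_1,\sin\theta_1]=D\phi(\mathbf{x}(t_1,\cdot))=[\cos\theta_1',\sin\theta_1']$ and likewise at $t_2$, so $\theta_1=\theta_1'$ and $\theta_2=\theta_2'$ as points of $S^1$, proving $\varphi_t$ injective. For the immersion, I differentiate the readout identities. At $t_2$, $D\,\text{diag}(\phi'(\mathbf{x}))\partial_{\theta_2}\mathbf{x}(t_2)=[-\sin\theta_2,\cos\theta_2]\neq\boldsymbol{0}$, so $\partial_{\theta_2}\mathbf{x}(t_2)\neq\boldsymbol{0}$, while $D\phi(\mathbf{x}(t_2))$ is independent of $\theta_1$ gives $D\,\text{diag}(\phi')\partial_{\theta_1}\mathbf{x}(t_2)=\boldsymbol{0}$. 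If the two tangents were dependent, $\partial_{\theta_1}\mathbf{x}(t_2)=\lambda\,\partial_{\theta_2}\mathbf{x}(t_2)$, applying $D\,\text{diag}(\phi')$ forces $\lambda=0$; but the $t_1$ readout yields $\partial_{\theta_1}\mathbf{x}(t_1)\neq\boldsymbol{0}$, which the invertible $\mathrm{d}\Phi_{t_2\leftarrow t_1}$ carries to $\partial_{\theta_1}\mathbf{x}(t_2)\neq\boldsymbol{0}$, a contradiction. Thus $\{\partial_{\theta_1}\mathbf{x},\partial_{\theta_2}\mathbf{x}\}$ are independent at $t_2$, hence at every $t$ in the window, giving the immersion condition of Corollary \ref{cor:immersion}.

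Finally, a continuous (indeed smooth) injective immersion out of the compact manifold $T^2$ into the Hausdorff space $\mathbb{R}^n$ is automatically a homeomorphism onto its image, hence a smooth embedding; so at each time in the delay and output period the state manifold is an embedded $2$-torus, and the same argument with $k$ stimuli yields an embedded $k$-torus. The main obstacle is the injectivity step: the generic topology result only guarantees an immersion, and excluding self-intersections genuinely requires the task-performance hypothesis together with the reversibility of the flow, since it is the decoder's ability to recover both $\theta_1$ and $\theta_2$ at distinct times that forbids two input conditions from being conflated into a single state.
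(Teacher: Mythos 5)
Your proof is correct, and its skeleton matches the paper's own argument: both rest on (i) the observation that after stimulus offset every input condition obeys the same dynamics, so the flow between any two times in the delay/output window is a diffeomorphism of $\mathbb{R}^n$ (the paper makes this precise by splitting the window into a composition of \emph{autonomous} flows so that Lemma \ref{lemma:smooth-dependence-on-init} applies verbatim, since that lemma is stated for autonomous systems; you invoke the nonautonomous flow directly, which is fine but strictly requires the same time-augmentation device the paper uses in the proof of Theorem \ref{thm:topology}), and (ii) injectivity forced by correct decoding at the two output times, transported to an arbitrary time $t$ by that diffeomorphism, with compactness of the torus and Hausdorffness of $\mathbb{R}^n$ closing the argument. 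Where you genuinely go beyond the paper is the immersion step: the paper stops at injectivity of $\mathbf{x}(t,\cdot,\cdot)$ and concludes a \emph{topological} embedding, whereas you also differentiate the readout identities to show that $\partial_{\theta_1}\mathbf{x}$ and $\partial_{\theta_2}\mathbf{x}$ stay linearly independent throughout the window (nonvanishing of $D\,\mathrm{diag}(\phi')\partial_{\theta_i}\mathbf{x}$ at the relevant output time, annihilation of the other tangent, and transport of nonvanishing by the invertible differential of the flow). This rules out tangent degeneracies and upgrades the conclusion to a \emph{smooth} embedding, i.e.\ the image is an embedded smooth submanifold rather than merely a homeomorphic copy of $T^2$ --- the stronger reading of the proposition, and the one actually needed if one wants a well-defined pullback metric and curvature on the torus, as the paper's working-memory analyses do. So: same route on the core injectivity argument, plus a worthwhile strengthening the paper omits.
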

We will prove this for a 2-torus (i.e. 2 targets), the proof generalises naturally to an arbitrary number of targets.
\begin{proof}    
    After the second input has been presented the system is a composition of autonomous flows: $(\varphi^3_{t_3} \circ \varphi^2_{t_2} \circ \varphi^1_{t_1})(\mathbf{x}(t_0, \theta_1, \theta_2))$ where $t_0$ is the time point where the second input is removed, $t_1$ is the delay, $t_2$ the duration of the go cue and $t_3$ the time until the end of the trial. Each flow is autonomous and therefore a time- and initial-state diffeomorphism by lemma \ref{lemma:smooth-dependence-on-init}. Furthermore, a composition of diffeomorphic flows is diffeomorphic --- and in particular homeomorphic. Therefore, the manifold will be an embedded torus if it is an embedded torus at a particular time point. 

    In theorem \ref{thm:topology} we have shown that $\mathbf{x}(t, \theta_1, \theta_2)=P(\mathbf{y}(t, \theta_1, \theta_2))$ where $P$ is a projection map and $\mathbf{y}(t, \theta_1, \theta_2)$ lies on a manifold of the same topology as its input, here a torus. 
    It therefore remains to prove that $P$ is a bijection from the torus to its image (i.e. injective).
       
    Let $(\theta^1_1, \theta^1_2), (\theta^2_1, \theta^2_2)$ such that $\mathbf{x}(t, \theta^1_1, \theta^1_2)=\mathbf{x}(t, \theta^2_1, \theta^2_2)$. Then, by time-diffeomorphism, $\mathbf{x}(t_1, \theta^1_1, \theta^1_2)=\mathbf{x}(t_1, \theta^2_1, \theta^2_2)$ where $t_1$ is the first output time. We can now apply the decoder on each side: $D\phi(\mathbf{x}(t_1, \theta^1_1, \theta^1_2))=D\phi(\mathbf{x}(t_1, \theta^2_1, \theta^2_2))$. By assumption, the RNN performs the task correctly, so this equality holds if and only if $\theta^1_1=\theta^2_1$. By the same argument for the time of the second output $\theta^1_2=\theta^2_2$. Thus by the time diffeomorphism, for all $t$, $\mathbf{x}(t, \theta^1_1, \theta^1_2)=\mathbf{x}(t, \theta^2_1, \theta^2_2)$ if and only if $(\theta^1_1, \theta^1_2) = (\theta^2_1, \theta^2_2)$.  This shows that $\mathbf{x}$ is a topological embedding. 
    


\end{proof}

\subsection{Parametric working memory}

We trained a model on a parametric working memory task called the Romo task\cite{romo1999neuronal, schuessler2020interplay}. The RNN sequentially receives inputs of varying magnitude through the same input vector.
\[
        \mathbf{\dot x} = W\phi(\mathbf{x}) - \mathbf{x} + \mathbf{b}u_1 \mathds{1}_{0\leq t\leq 0.5} + \mathbf{b}u_2 \mathds{1}_{1.5\leq t\leq 2.0}
\]
where $u_1,u_2\in [0.5, 1.0]$. The task of the network is to output $0$ until the end of a delay period, and then the sign of the difference between $u_1$ and $u_2$ afterwards, as encoded via a one-hot vector (Fig. \ref{sup_fig:romo}\textbf{a}).
\[
    L(W, A, D) = \textstyle\int_0^{T_1}||D\phi(\mathbf{x}(t))||^2dt + \textstyle\int_{T_2}^{T_3}||D\phi(\mathbf{x}(t)) - [\mathds{1}_{u_1>u_2}, \mathds{1}_{u_1<u_2}]||^2dt
\]
where $T_1=4.0$ is the end of the delay, $T_2=4.5$ and $T_3=7.0$ the end of the trial. Here $D\in \mathbb{R}^{n\times 2}$ and $\phi$ is the tanh nonlinearity.

\begin{table}[h!]
    \centering
    \begin{tabular}{llllllll}
        \toprule
        $n$ & $g$ & lr & Iterations & Optimiser & ODE solver\cite{kidger2022neural} & dt \\
        \midrule
        100 & 0.5 & $10^{-3}$ & 2000 & Adam\cite{kingma2014adam} & Heun & $10^{-2}$ \\
        \bottomrule
    \end{tabular}
    \label{tab:romo-hyper-parameters}
\end{table}

We found that the dimensionality of the manifold expanded during the delay (Fig. \ref{sup_fig:romo}\textbf{b}, \text{bottom}). Indeed, the metric became a multiple of the identity matrix (Fig. \ref{sup_fig:romo}\textbf{b}, \text{top}), suggesting that the input which was originally encoded along a single vector $\mathbf{b}$ had now been stored along two locally orthogonal directions $\partial_{u_1}\mathbf{x}$ and $\partial_{u_2}\mathbf{x}$ of similar magnitudes, on the manifold. However, by the time of the output period the manifold had collapsed to become intrinsically $1$-dimensional (Fig. \ref{sup_fig:romo}\textbf{b}, \textit{bottom}). Furthermore, the partial derivatives $\partial_{u_1}\mathbf{x}$ and $\partial_{u_2}\mathbf{x}$ pointed in opposite directions (Fig. \ref{sup_fig:romo}\textbf{b}, \textit{top}). This is consistent with the network collapsing the representation of the two inputs onto a line representing the output. We can consider the pullback of the subtraction operation: suppose that the two sequential inputs have been mapped to orthogonal vectors $\mathbf{v}_1,\mathbf{v}_2\in\mathbb{R}^n$ such that the neural representation is given by $\mathbf{x}(u_1, u_2) = u_1\mathbf{v}_1+ u_2\mathbf{v}_2\in\mathbb{R}^n$ and consider the subtraction operation $\varphi:\mathbb{R}^2\rightarrow \mathbb{R}^n$ given by $\varphi(u_1, u_2)=u_2\mathbf{v}_2-u_1\mathbf{v}_1$. Then the pullback metric is: 
\[
    G(u_1, u_2)=J_{\varphi}^TJ_{\varphi} = [1, -1]\otimes [1, -1]=\begin{bmatrix}
        1 & -1\\
        -1 & 1
    \end{bmatrix}
\]
which matches the form of metric of the manifold near the output time (Fig. \ref{sup_fig:romo}\textbf{b}, \textit{top}). The $\mathrm{sign}$ operation and the mapping to the one-hot encoded outputs can be performed via the decoder $D\phi(\cdot)$, consistent with previously reported partitions of task computations into linear and binarisation operations \cite{brandon2025emergent}.

\begin{figure}[h]
    \centering
    \includegraphics[width=0.8\linewidth]{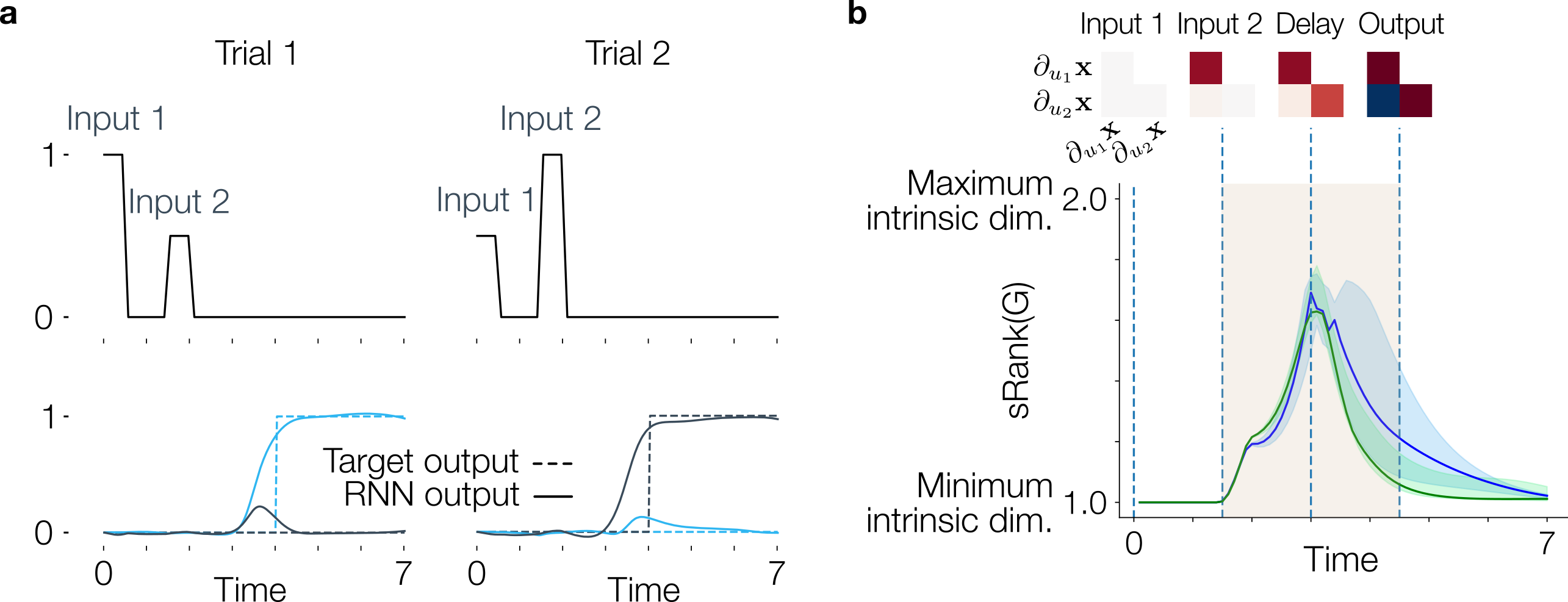}
    \caption{\textbf{Neural dimensionality expands to store working memories before collapsing to task outputs.} \textbf{a.} \textit{Romo task}: the network receives two inputs sequentially and then has to output the one-hot encoded sign of the larger input. \textbf{b.} Intrinsic dimensionality of the manifold, as quantified by the stable rank (sRank) of the metric, that is $\frac{\lVert G\rVert_*}{\lVert G\rVert_2}$. Blue is over the submanifold where the output is $[0, 1]$, i.e. $u_2>u_1$. Green is over the submanifold where the target output is $[1, 0]$, meaning $u_1>u_2$. The error bars are the $0.2$ and $0.8$ quantiles.}
    \label{sup_fig:romo}
\end{figure}

\subsection{State-space model on brain-computer interface data}

To illustrate the usefulness of our method on data applications we trained a state-space model (SSM) on brain-computer interface data. The data were recorded from a human subject instructed to draw centre-out lines\cite{willett2021high}. We found that a one-layer HiPPO (LegS) SSM\cite{gu2022train} performed the task well:
\[
    \mathbf{\dot x} = A\mathbf{x} + B\mathbf{u}(t), \qquad L(A, B) = \sum_{i=1}^T \lVert D\mathbf{x}(t_i)- \mathbf{y}(t_i)\rVert^2
\]
Where $\mathbf{x}(t)$ is the state of the SSM and $\mathbf{u}(t)$ is the time-interpolated input and $\mathbf{y}(t_i)$ the target cursor position. Since the task is performed offline, we defined the target cursor position as $\mathbf{y}(t)=r(t, l)[\cos \theta, \sin\theta]$ where the radius $r$  is given by:
\[
    r(t) = \begin{cases} 
    0 & t < t_{\text{onset}} \\
    \frac{l}{1 + e^{-(20t - 6)}} & t_{\text{onset}} \leq t \leq t_{\text{offset}}\\
    l & t> t_{\text{offset}}
    \end{cases}
\]
where $l\in [1, 3]$ is the length of the line.

\begin{table}[h!]
    \centering
    \begin{tabular}{lllllll}
        \toprule
        $n$ & lr & Iterations & Optimiser & ODE solver\cite{kidger2022neural} & dt \\
        \midrule
        100 & $10^{-3}$ & 2000 & Adam\cite{kingma2014adam} & Heun & $10^{-2}$ \\
        \bottomrule
    \end{tabular}
    \label{tab:bci-hyperparameters}
\end{table}

For a fixed line length, our theory predicts that after the movement onset the state of the SSM will lie on a manifold of cylindrical topology representing the time $t$ and angular $\theta$ variables (Fig. \ref{sup_fig:bci}\textbf{a}). To study how different line lengths affect the dynamics, we interpolated the neural data using Akima cubic splines\cite{akima1970new} to define a continuous-in-line-length input $\mathbf{u}_{l}(t)$ to the SSM. Computing the time-length metric reveals that, during movement, the time and length partial derivatives become strongly aligned, suggesting that to perform longer lines neural activity accelerates (Fig. \ref{sup_fig:bci}\textbf{b}). Such accelerating neural trajectories have been previously observed in RNNs and neural data from timing tasks\cite{wang2018flexible}. We thus illustrate that computing the metric allows to directly probe for this effect.

\begin{figure}[h]
    \centering
    \includegraphics[width=0.9\linewidth]{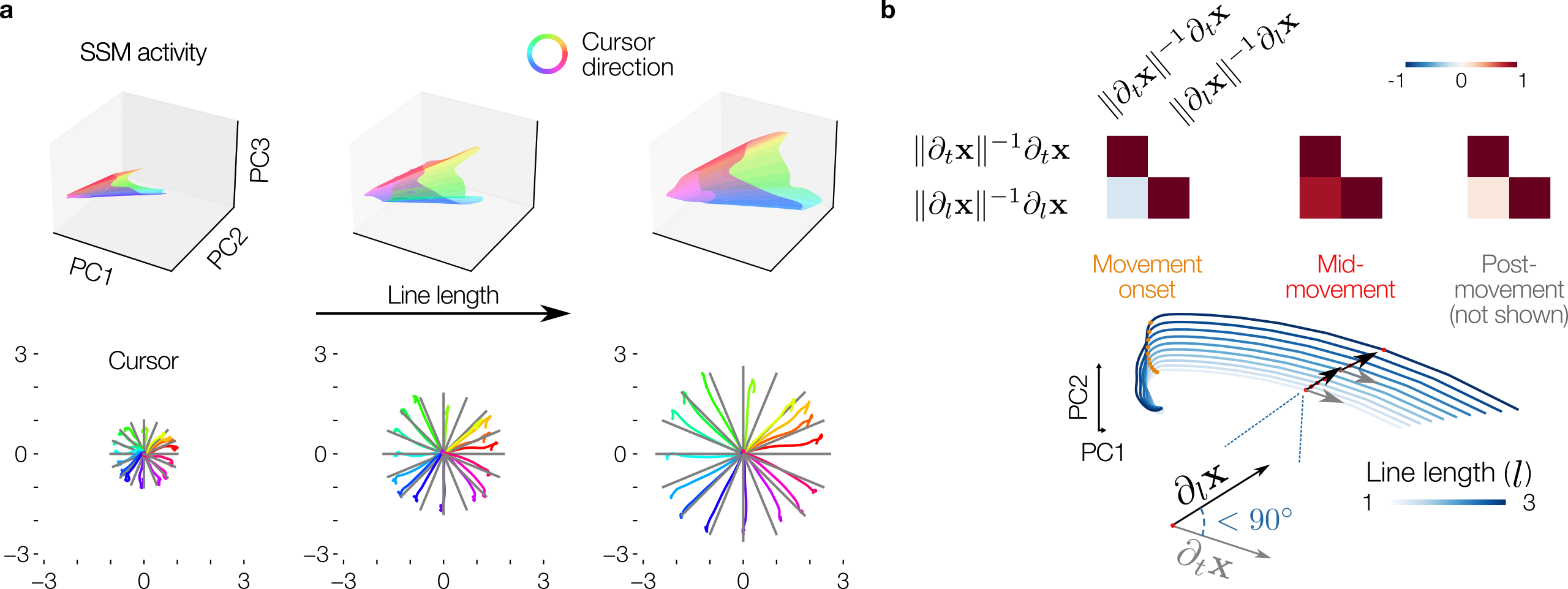}
    \caption{\textbf{SSM trajectory speed varies to generate BCI cursor movements of different lengths.}  \textbf{a.} \textit{Top:} SSM activity in PCA space for a short timespan after movement onset. \textit{Bottom:} Decoded cursor position. \textbf{b.} Length-time metric. At movement onset the time $t$ and line length $l$ variables are locally encoded in orthogonal directions, while during movement they are strongly aligned.}
    \label{sup_fig:bci}
\end{figure}

\AtNextBibliography{\small}
\printbibliography[heading=subbibliography,title={\large Supplementary References}]

\end{refsection}


\end{document}